\newcommand{\ours}{{SL-DSGCN}\xspace}
\newcommand{\dsgnn}{{DSGCN}\xspace}
\def \E {\mathcal{E}}
\def \G {\mathcal{G}}
\def \V {\mathcal{V}}
\def \N {\mathcal{N}}
\def \L {\mathcal{L}}
\def \R {\mathbb{R}}
\def \X {\mathbf{X}}
\def \W {\mathbf{W}}
\def \x {\mathbf{x}}
\title[Investigating and Mitigating Degree-Related Biases in Graph Convolutional Networks]{Investigating and Mitigating Degree-Related Biases in \\ Graph Convolutional Networks}
\author{Xianfeng Tang$^\dagger$, Huaxiu Yao$^\dagger$, Yiwei Sun$^\dagger$, Yiqi Wang$^\ddagger$, Jiliang Tang$^\ddagger$} \author{Charu Aggarwal$^\mathsection$, Prasenjit Mitra$^\dagger$, Suhang Wang$^{\dagger*}$}
\affiliation{
\institution{The Pennsylvania State University$^\dagger$, Michigan State University$^\ddagger$,
IBM T.J. Watson, NY, USA$^\mathsection$}
}
\affiliation{
  \institution{\{xut10,huy144,yus162,pum10, szw494\}@psu.edu  \{wangy206,tangjili\}@msu.edu  charu@us.ibm.com}
}
\begin{document}
\begin{abstract}
    
    Graph Convolutional Networks (GCNs) show promising results for semi-supervised learning tasks on graphs, thus become favorable comparing with other approaches. Despite the remarkable success of GCNs, it is difficult to train GCNs with insufficient supervision. When labeled data are limited, the performance of GCNs becomes unsatisfying for low-degree nodes.
    While some prior work analyze successes and failures of GCNs on the entire model level, profiling GCNs on individual node level is still underexplored.
    
    In this paper, we analyze GCNs in regard to the node degree distribution. From empirical observation to theoretical proof, we confirm that GCNs are biased towards nodes with larger degrees with higher accuracy on them, even if high-degree nodes are underrepresented in most graphs.
    We further develop a novel Self-Supervised-Learning Degree-Specific GCN (\ours) that mitigate the degree-related biases of GCNs from model and data aspects.
    Firstly, we propose a degree-specific GCN layer that captures both discrepancies and similarities of nodes with different degrees, which reduces the inner model-aspect biases of GCNs caused by sharing the same parameters with all nodes. 
    Secondly, we design a self-supervised-learning algorithm that creates pseudo labels with uncertainty scores on unlabeled nodes with a Bayesian neural network. 
    Pseudo labels increase the chance of connecting to labeled neighbors for low-degree nodes, thus reducing the biases of GCNs from the data perspective.
    Uncertainty scores are further exploited to weight pseudo labels dynamically in the stochastic gradient descent for \ours.
    Experiments on three benchmark datasets show \ours not only outperforms state-of-the-art self-training/self-supervised-learning GCN methods, but also improves GCN accuracy dramatically for low-degree nodes.

\end{abstract}
\maketitle
\section{Introduction} 

Over last few years, Graph Convolutional Networks (GCNs) have benefited many real world applications across different domains, such as molecule design~\cite{you2018graph}, financial fraud detection~\cite{wang2019semi}, traffic prediction~\cite{wang2020traffic,yu2017spatio}, and user behavior analysis~\cite{tang2020knowing,li2020few,huang2019online}. One of the most important and challenging applications for GCNs is to classify nodes in a semi-supervised manner. In semi-supervised learning, GCNs recursively update the feature representation of each node by applying node-agnostic transformation parameters. The whole training process is supervised by a few labeled nodes. \footnote{Suhang Wang is the corresponding author.}

However, degree distributions of most real-world graphs (e.g., citation graphs, review graphs, etc.) are power-law \cite{albert2002statistical,faloutsos1999power,clauset2009power}. While the degree of major nodes are relatively small, few nodes on the long-tail side can dominate the training/learning of GCNs (we refer to Figure~\ref{fig:degree_dist} in the analysis section as examples). We argue the power-law distributed node degree could hurt the performance of GCNs.
On the one hand, nodes on such a graph are not independent and identically distributed (\textit{i.i.d}), thus the parameters of a GCN should not be shared by all nodes. 
As suggested by~\cite{mislove2007measurement}, nodes with various degrees play different roles in the graph. Taking social networks as an example, high-degree nodes are usually leaders with higher influence; while most low-degree ones are at the fringes of the network. Current GCNs with node-agnostic parameters overlook the complex relations and roles of nodes with different degrees.
On the other hand, the non-\textit{i.i.d} node degrees can hurt the message-passing mechanism of GCNs.
In fact, the superior performance of GCNs relies on the information propagating from labeled nodes to unlabeled nodes \cite{hamilton2017inductive}. Obviously, nodes with lower degrees are less likely to be connected to labeled neighbors, compared with high-degree ones.
As a result, less information are passed to these low-degree nodes, resulting in unsatisfying or even poor prediction performance. 
Few literature have explored the effects of non-\textit{i.i.d} node degrees on real-world graphs. Recently, \citeauthor{wu2019demo} \cite{wu2019demo} propose a multi-task learning framework for GCNs, where the degree information is encoded into learned node representations. However, simply incorporating the value of degree as an extra feature does not solve the potential biases of GCNs, and low-degree nodes still suffer from the insufficient supervisions.

Therefore, in this paper, we analyze the degree-related biases in GCNs thoroughly.
First, we design a series of observational tests to validate our assumption: the performance of GCNs are not evenly distributed regarding node degrees, and GCNs are biased on low-degree nodes.
We further prove that the training of GCNs are more sensitive to nodes with higher degrees using sensitivity analysis and influence functions in statistics \cite{koh2017understanding,xu2018representation}. Inspired by the analytic results, we realize two challenges of addressing the degree-related biases in GCNs as follows:
\textbf{(C1) How to capture the complex relation among nodes with different degrees?} We recognize three types of node relations including global shared relation, local intra-relation, and local inter-relation. The global shared relation captures the common property among all nodes in the whole graph (i.e., what GCNs already done); the local intra-relation describes the similarity of nodes with the same degree; and the local inter-relation further characterizes the interacted information from nodes with similar degrees, as they may behave likewise. Therefore, a sufficiently generalized and powerful degree-specific GCN is required, which not only balances the global generalization and local degree customization of different nodes, but also captures local relation among nodes with various degrees; and
\textbf{(C2) How to provide effective and robust information to facilitate the learning of GCNs on low-degree nodes?} 
It is non-trivial to make accurate predictions with limited labeled neighbors, due to the biased information propagation. How to create sufficient supervisions for low-degree nodes is extremely challenging.

To address these challenges, in this paper, we propose a novel Self-Supervised-Learning Degree-Specific GCN (\ours), which reduces the biases from non-\textit{i.i.d} node degrees in conventional GCNs. In particular, we first design a degree-specific GNN layer, which considers both globally shared information and local relation among nodes with same degree value. A recurrent neural network (RNN) based parameter generator is designed for modeling the inter-degree relation, which is ignored in the prior work DEMO-Net \cite{wu2019demo}.
We then leverage the massive unlabeled nodes to construct artificial supervisions for low-degree nodes. We propose a self-supervised-learning paradigm where a Bayesian neural network serves as the teacher and assigns pseudo/soft labels jointly with uncertainty scores on unlabeled nodes. We further utilize the uncertainty scores as a guidance in stochastic gradient descent to prevent overfitting inaccurate pseudo labels when training \ours.
\ours is evaluated on three benchmark datasets and show superior performance over state-of-the-art methods. Besides, it reduces label prediction error on low-degree nodes dramatically.

In summary, our contributions are three-fold:
\begin{itemize}[leftmargin=*]
    \item We study a novel problem of addressing the degree-related biases in GCNs. To the best of our knowledge, we are the first to analyze this problem empirically and theoretically.
    \item We design \ours that tackles the degree-related biases in GCNs from both model and data distribution aspects using the proposed degree-specific GCN layer and self-learning algorithm, correspondingly.
    \item We validate \ours on three benchmark graph datasets and confirm that \ours not only out-performs state-of-the-art baselines, but also improves the prediction accuracy on low-degree nodes significantly.
\end{itemize}
\section{Related Work}
In this section, we review related works, which includes graph neural networks and self-supervised learning.

\subsection{Graph Convolutional Neural Networks}
Graph data are ubiquitous in real-world. Recently, graph convolutional neural networks (GCNNs) have achieved state-of-the-art performance for many graph mining tasks \cite{xu2018powerful,kipf2016semi,hamilton2017inductive}, and many efforts have been taken~\cite{wu2020comprehensive,ying2018graph,xu2018representation,jin2020graph,tang2020transferring,sun2020adversarial}.
In general, these GCNNs could be divided into two categorizes: spectral based GCNNs and spatial-based GCNNs. \citeauthor{bruna2013spectral} firstly propose the spectral based GCNNs~\cite{bruna2013spectral} by applying the spectral filter on the local spectral space according to the spectral graph theory. Following this work, various spectral-based GCNNs~\cite{bronstein2017geometric, defferrard2016convolutional,hamilton2017inductive, kipf2016semi} are developed to improve the performances. GCN~\cite{kipf2016semi} aggregates the neighborhood information from the perspective of spectral theory. 
With the similar intuition, GraphSAGE~\cite{hamilton2017inductive} extends prior works in the inductive setting.
The spectral based GCNNs usually require to compute the Laplacian eigenvectors or the approximated eigenvalues as suggested by spectral theory, and these methods are inefficient on large scale graph. Different from the  spectral based ones, to improve the efficiency, the spatial-based GCNNs~\cite{atwood2016diffusion, velivckovic2017graph, zhang2018gaan} attempt to directly capture the spatial topological information and use the mini-batch training schema. For example, DCNN\cite{atwood2016diffusion} combines graph convolutional operator with the diffusion process and
\citeauthor{velivckovic2017graph} proposes the graph attention network~\cite{velivckovic2017graph} with the self-attention mechanism on the neighbors of the node and assign different weights accordingly during the aggregation process.
Of all these GCNNs, GNNs \cite{kipf2016semi} are highly favorable by the computer science community \cite{sun2019multi,li2018deeper} due to the reliable performance. Thus, we select GCNs for this work.

Though GCNs have show promising results, recent advancements~\cite{zugner2018adversarial, dai2018adversarial, xu2018powerful} also reveal various issues of GCNs including the over-smoothing and the vulnerability. In this paper, we empirically validate a new issue of GCNNs, i.e., \textit{GCNNs are biased towards high-degree nodes and have low accuracy on low-degree ones}. A potential reason is the imbalanced labeled node distribution. The issue is amplified when the total amount of labeled node for training is small. 


\subsection{Self-Supervised Learning}
Recently, self-supervised learning, which generally refers to explicitly training models with automatically generated labels, has become a successful approach in computer vision and natural language processing for unsupervised pretraining and for addressing the issue of lacking labeled data~\cite{jing2020self}. For example, pretext tasks such as Image
Inpainting~\cite{pathak2016context} and Image Jigsaw
Puzzle~\cite{noroozi2016unsupervised} are widely adopted in computer vision domains.

The success of self-supervision has motivated its study in graph mining domains. Though still in its early stage, there are a few seminal work trying to exploit self-supervised training to improve the performance of GCNs \cite{jin2020self}. 
For example, \citeauthor{li2018deeper}~\cite{li2018deeper} propose the co-training and self-training based GCN models by expanding the training node set with pseudo labels from its nearest neighborhoods; \citeauthor{sun2019multi} \cite{sun2019multi} combine DeepCluster \cite{caron2018deep} with a multi-stage training framework so that the generalization performance of GCNs with few labeled nodes are improved. 

Despite their initial success, existing studies mainly utilize self-supervised training as a trick for GCNs, without digging deep into why self-supervised training can improve the performance and what kind of nodes are benefited most from the self-supervised training. Our work is inherently different from existing ones on self-supervised GCNs. The lack of labeled neighborhoods among low-degree nodes motivate us to explore self-supervised training to balance the label distribution. The proposed self-supervision based one teacher-student network is also different from existing work. In addition, we also address the issue from the perspective of degree-specific layers. 

To the best of our knowledge, only few work address the degree non-\textit{i.i.d} sampled problem. DEMO-Net~\cite{wu2019demo} learn the degree-specific representation for each node via the explicitly designed hash table. This work is significantly different from ours. Besides, \textit{it fails to capture the similarity of nodes with close degree values}, where the RNN-based parameter generator in \ours is able to do so.


\section{Preliminaries}
We use $\G = (\V, \E, \X)$ to denote a graph, where $\V = \{v_1, \dots, v_N\}$ is the set of $N$ nodes, $\E \subseteq \V \times \V$ represents the set of edges, and $\X = \{\x_1, \dots, \x_N\}$ indicates node features.
We use $d_i \in \R^+$ to denote the degree of node $v_i$. 
In semi-supervised setting, partial nodes come with labels and are denoted as $\V^L$, where the corresponding label of node $v_i$ is $y_i$. Similarly, the unlabeled part is defined as $\V^U$.

We introduce the architecture of a GCN. A GCN contains multiple layers. Each layer transforms its input node features to another Euclidean space as output. Different from fully-connected layers, a GCN layer takes first-order neighbors' information into consideration when transforming the feature vector of a node. This ``message-passing'' mechanism ensures the initial features of any two nodes can affect each other even if they are faraway neighbors, along with the network going deeper.
We use $\x_v^{l}$ to denote the learned representation of node $v$ from the $l$-th layer in a GNN ($l = 1,\cdots,L$). Specifically, $\x_v^{0} = \x_v$.
The output node features of the $l$-th layer, which also formulate the input to the next layer, are generated as follows:
\begin{equation}
\label{eqn:general_layer}
    \x_i^{l+1} =\sigma\Big(\sum_{v_j \in \N(i)} \frac{1}{\sqrt{d_i\cdot d_j}} \W^l \x_j^l\Big),
\end{equation}
where $\N(i)$ denotes the immediate neighbor nodes of $v_i$ and  $\sigma$ is the activation function (e.g., ReLU). 

We take node classification as an example task for the rest of the paper, without loss of generality. The objective of training GNNs is to minimize the following cross-entropy loss function:
\begin{equation} \label{eqn:loss}
    \L = \sum_{v_i \in \V^L}\L(v_i) = -\sum_{v_i \in \V^L} y_i\log \hat{y}_i,
\end{equation}
where $y_v$ and $\hat{y}_v$ are true and predicted labels, respectively. Typically, $\hat{y}_v = \text{Softmax}(\x_v^{(L)})$ is acquired by applying Softmax to the representations from the last layer.
\section{Data Analysis} 
In this section, we conduct preliminary analysis on real-world graphs to show the properties of real-world graphs for semi-supervised node classification and the issue of GCNs on these datasets. The preliminary analysis lays a solid foundation and paves us a way to design better GCNs. Since we aim to discover the issue of GCNs on real-world datasets, we choose four widely used datasets from GCNs literature to perform the analysis, which includes Cora, Citeseer, Pubmed \cite{kipf2016semi}, and Reddit \cite{hamilton2017inductive}. Note that the split of training, validation and testing on all datasets are the same as described in the cited papers. 

\subsection{Degree Distribution}
The degree distribution of most real-world graphs follows the power-law~\cite{faloutsos1999power,albert2002statistical}. To verify this, we plot the degree distribution of the four datasets in Figure \ref{fig:degree_dist}. As we can see from the figure, degrees of the majority of nodes are relatively low, and decrease as the value of degree raise up. The shape of the degree distributions verify our assumption. The power-law distribution indicates nodes on graph are non-\textit{i.i.d} distributed. Applying the same network parameters on all nodes may result in sub-optimal prediction/classification.
\begin{figure}[t]
    \centering
    \includegraphics[width=0.45\columnwidth]{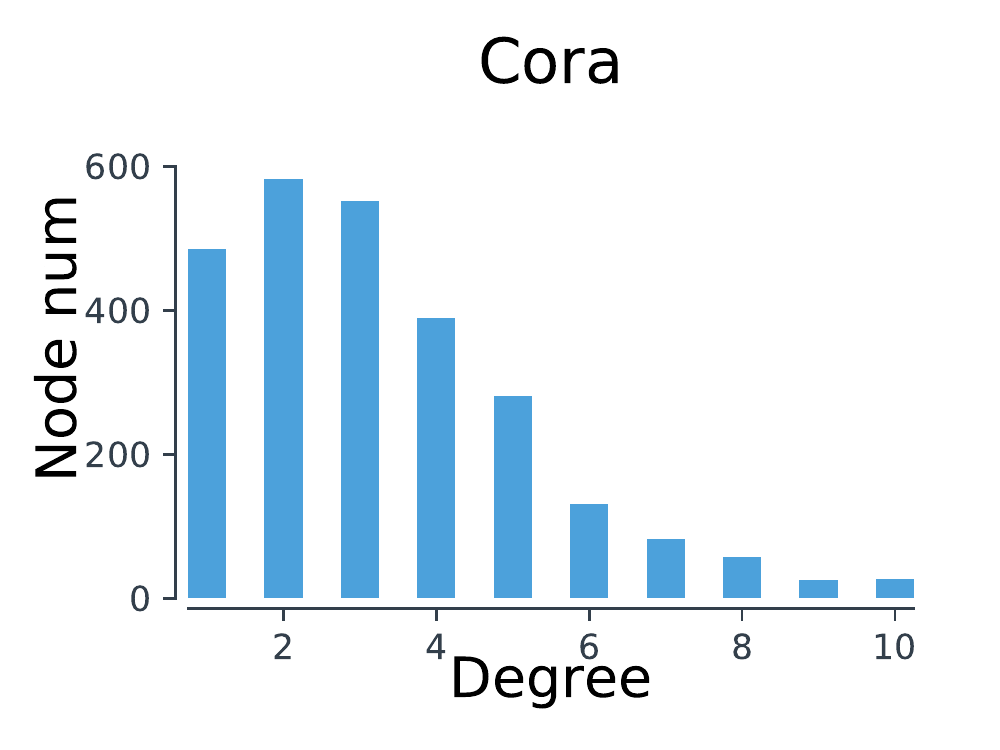} ~~
    \includegraphics[width=0.45\columnwidth]{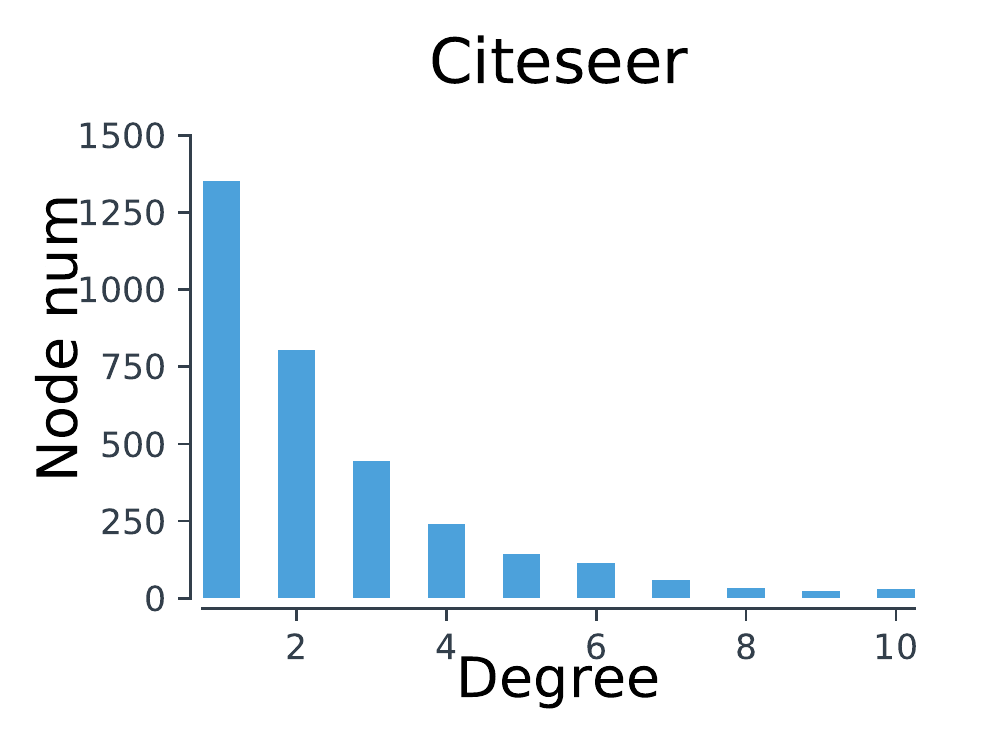}
    \hfill
    \includegraphics[width=0.45\columnwidth]{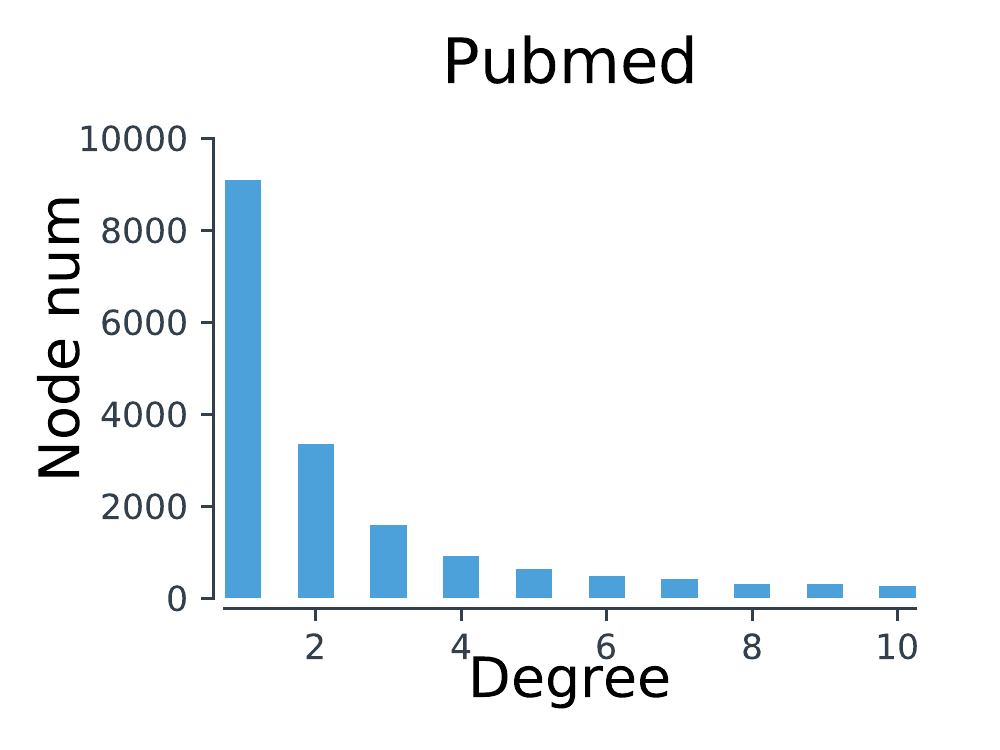} ~~
    \includegraphics[width=0.45\columnwidth]{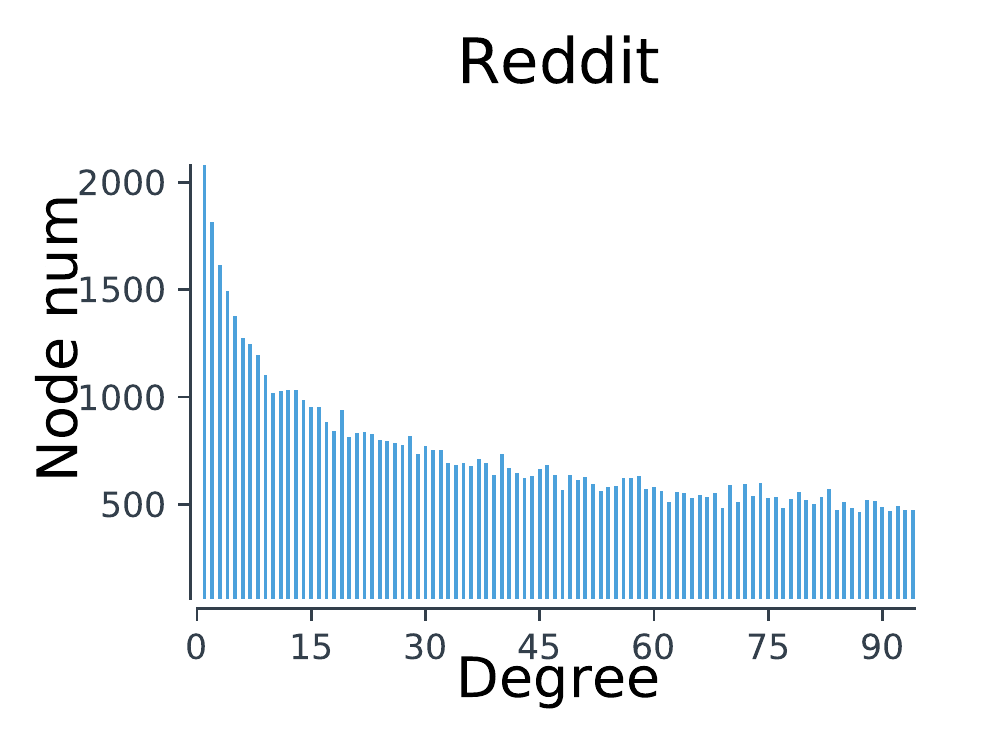}
    \vskip -1em
    \caption{Degree distribution.}
    \label{fig:degree_dist}
    \vskip -1em
\end{figure}

\subsection{Accuracy Varying Node Degree}
GCNs rely on message-passing mechanism, and aggregates the information from neighbors to learn representative embedding vectors. Because the degree of nodes follows a nonuniform (power-law) distribution, low-degree nodes, which are the majority, will receive less information during the aggregation. As a results, the error rate on low-degree nodes could be higher. To validate the assumption, we train GCNs following the same setting in \cite{kipf2016semi}, and report its error rate on node classification tasks w.r.t degree of nodes. From Figure \ref{fig:error_dist}, we find that, when degree is small, the error rate decreases significantly as the degree of nodes becomes larger. This verify our assumption that low-degree nodes receive less information during the aggregation and GCNs is biased against low-degree nodes.
\begin{figure}[!h]
    \centering
    \includegraphics[width=0.45\columnwidth]{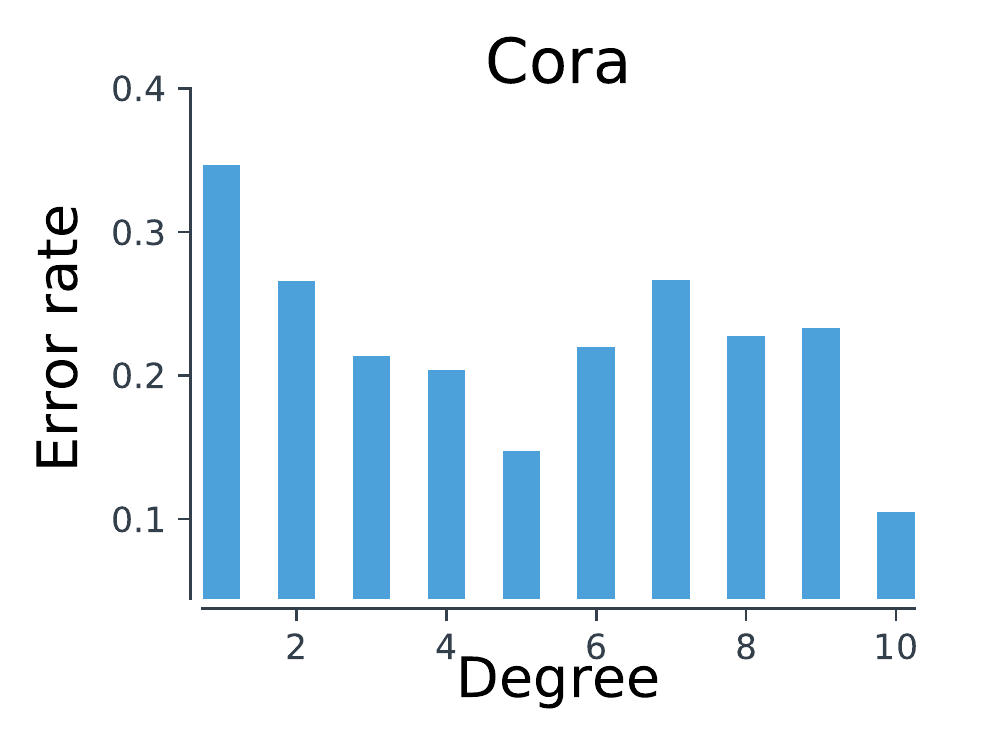} ~~
    \includegraphics[width=0.45\columnwidth]{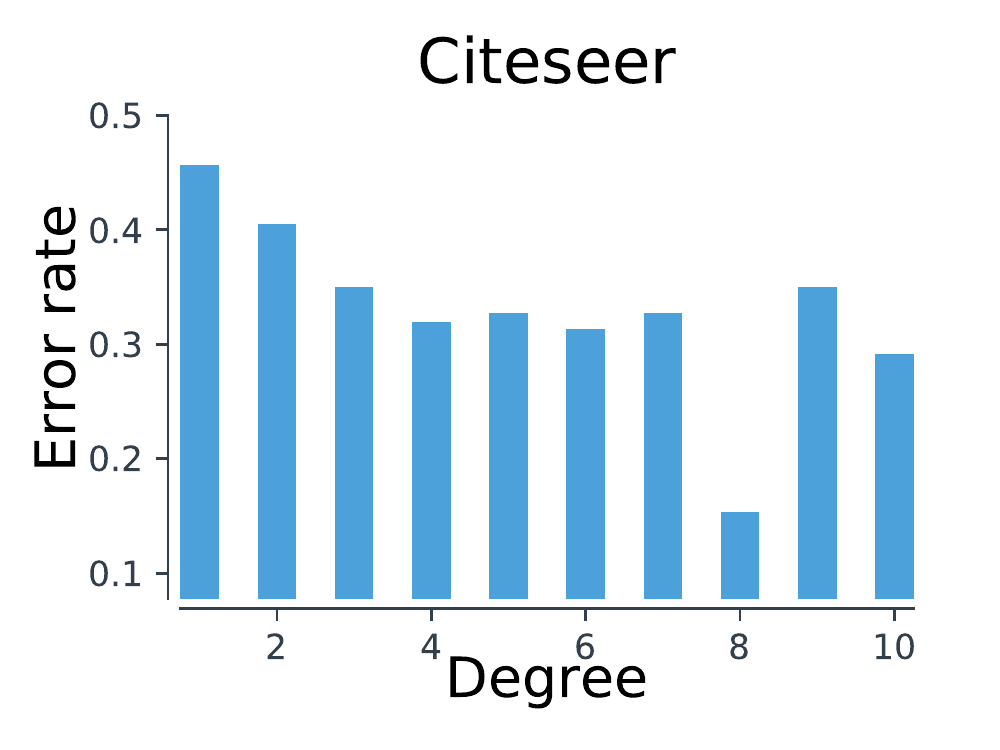}
    \hfill
    \includegraphics[width=0.45\columnwidth]{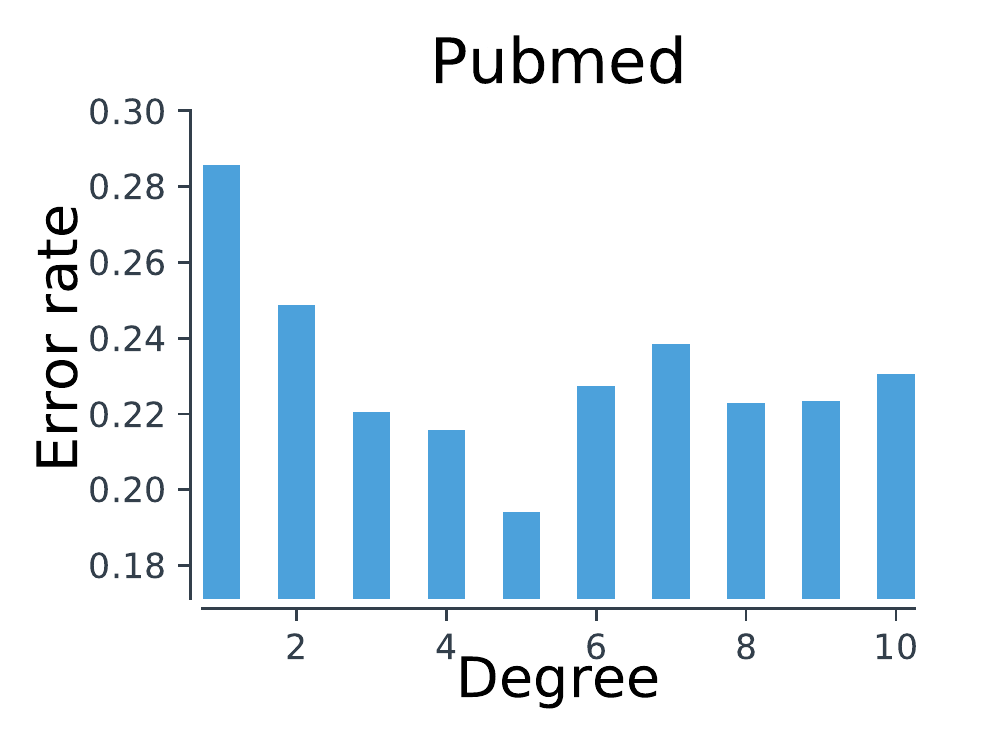} ~~
    \includegraphics[width=0.45\columnwidth]{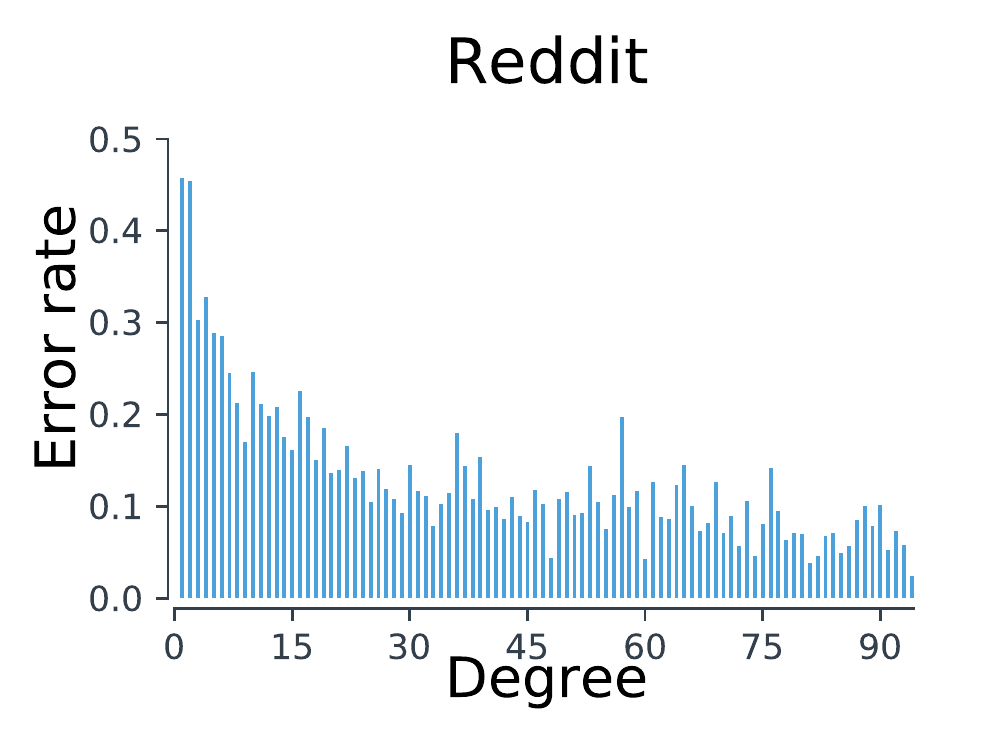}
    \vskip -1em
    \caption{Error distribution w.r.t node degree.}
    \label{fig:error_dist}
    \vskip -1.2em
\end{figure}

\subsection{Labeled Neighbor Distribution}
\begin{figure}[t]
    \centering
    \includegraphics[width=0.45\columnwidth]{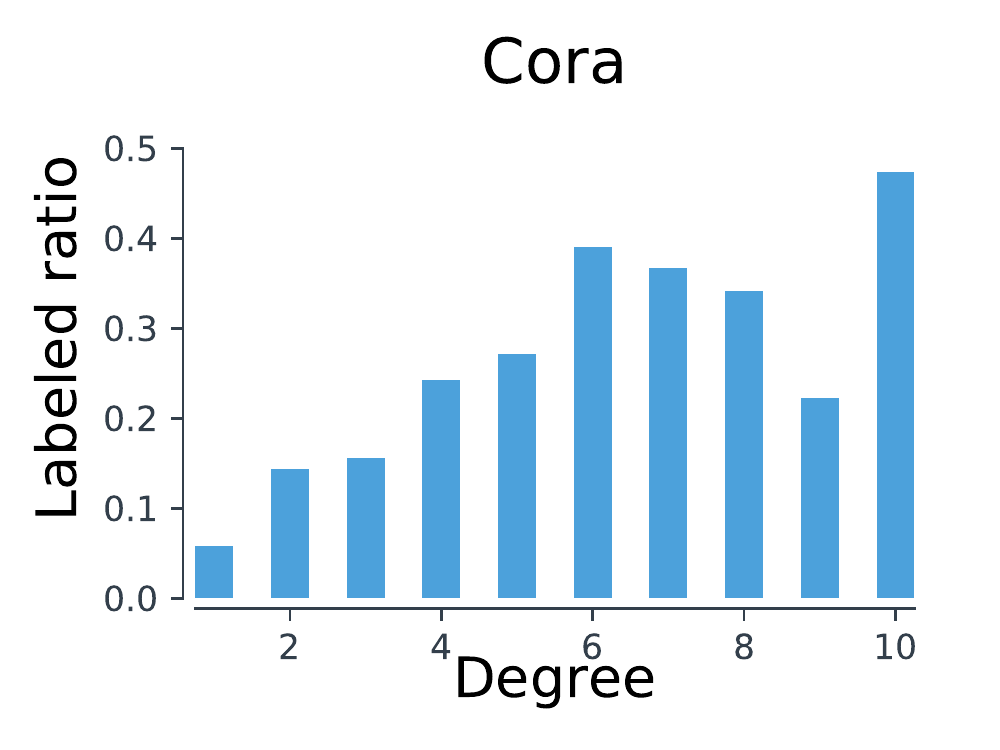} ~~
    \includegraphics[width=0.45\columnwidth]{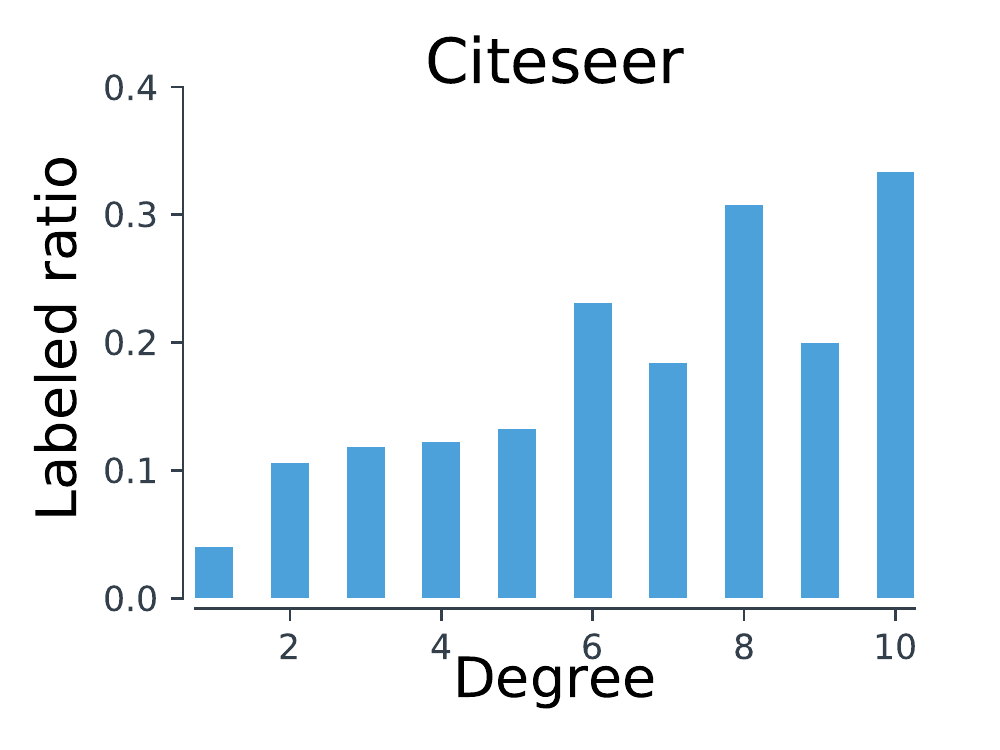}
    \hfill
    \includegraphics[width=0.45\columnwidth]{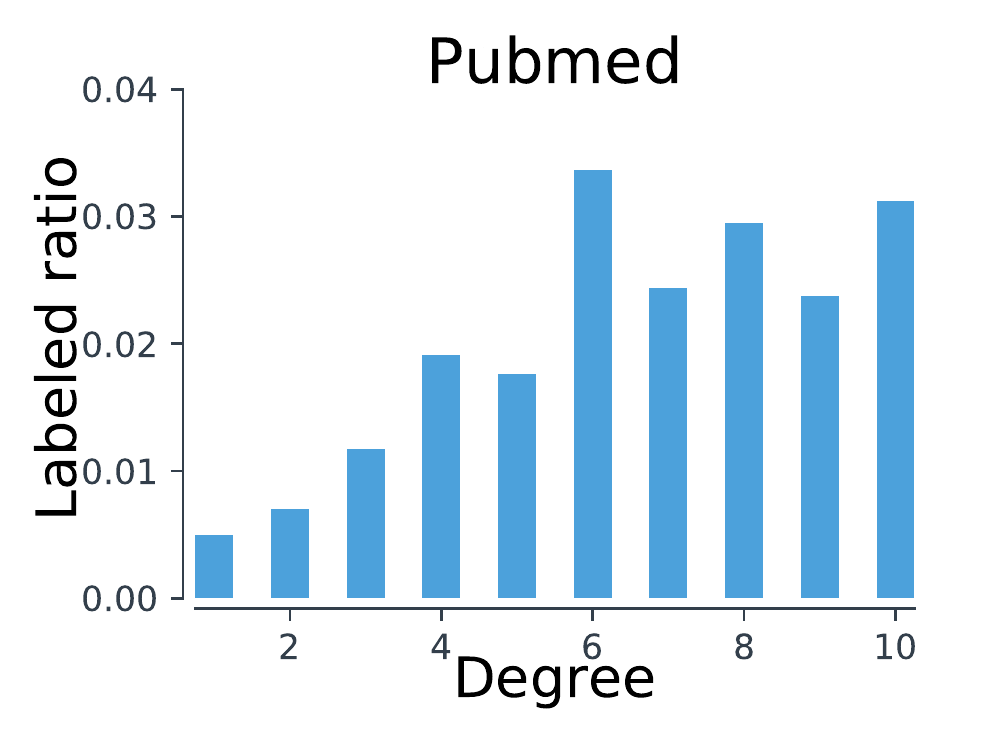} ~~
    \includegraphics[width=0.45\columnwidth]{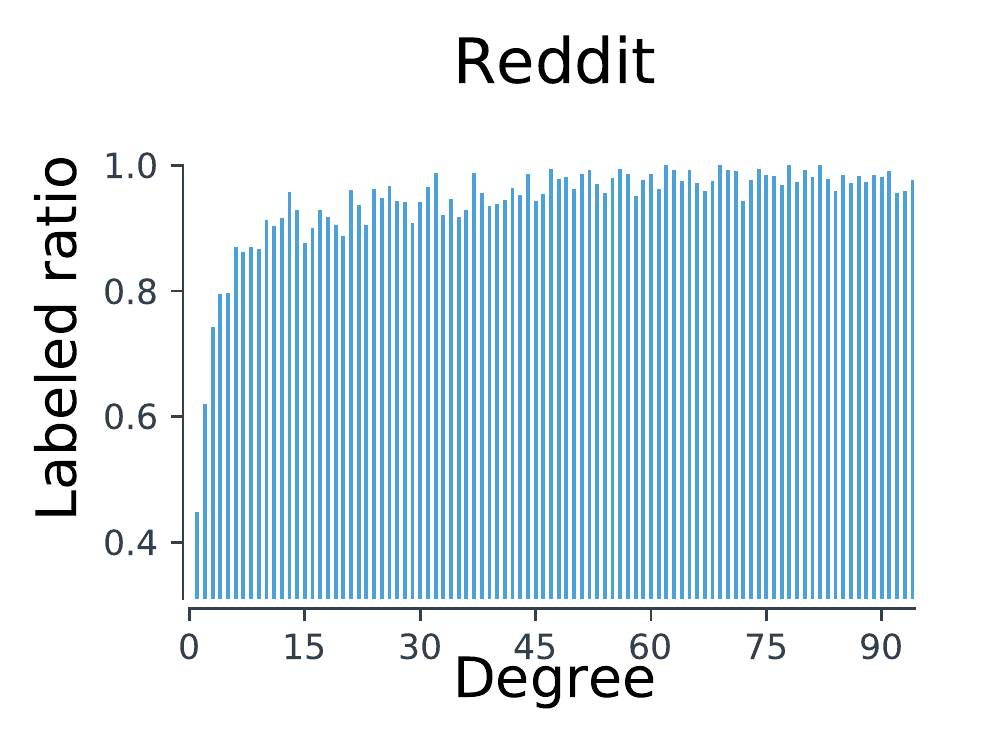}
    \vskip -1em
    \caption{Ratio of being neighbor with a labeled node.}
    \label{fig:has_labeled_nbhd_dist}
    \vskip -1.2em
\end{figure}

To further understand how the non-uniform degree distribution hurts GCNs, we analyze the probability of being connected to any labeled neighbor w.r.t node degree, as illustrated in
Figure \ref{fig:has_labeled_nbhd_dist}. We can conclude that nodes with higher degrees are much more likely to own labeled neighbors comparing with lower degree ones.
In training process, GCNs use back-propagation to update its neural parameters such that the classification error on labeled nodes is reduced. Thanks to the message-passing mechanism, nodes with labeled neighbors participate more frequently in the optimization process. As a result, GCNs performs better on high-degree nodes.


\subsection{Bridging Node Degree and  Biases in GCNs}
Inspired by \citeauthor{koh2017understanding} \cite{koh2017understanding} and \citeauthor{xu2018representation} \cite{xu2018representation}, we borrow  ideas of sensitivity analysis and influence functions in statistics field to measure the influence of a specific node to the accuracy of GCNs.
We first define node influence from node $v_i$ to $v_k$ as follows:
\begin{equation}
    I(i,k) = \Vert \mathop{\mathbb{E}}(\partial \x_i^{L} / \partial \x_k) \Vert,
\end{equation}
which measures how the feature of $v_i$ affects the training of GCN on node $v_k$. 
Because the loss function is defined purely on labeled nodes, the influence of any unlabeled node (say $v_i$) to the whole GCN can be approximated by the overall influence of every labeled node:
\begin{equation}
    S(i) = \sum_{v_k\in \V^L} I(i,k).
\end{equation}
We can summarize the relation of node degree and the performance of GCNs in the following theorem:
\begin{theorem}
Assume ReLU is the activation function. Let $v_i$ and $v_j$ denote two nodes in a graph. If we have $d_i > d_j$, then the influence score follows: $S(i) > S(j)$ of an untrained GCN.
\end{theorem}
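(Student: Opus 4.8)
The plan is to follow the sensitivity-analysis / influence-function route of \cite{xu2018representation,koh2017understanding}: first write the per-pair quantity $I(i,k)$ in closed form as a nonnegative matrix power, then reduce $S(i)$ to a row sum of that matrix, and finally argue this row sum grows with the degree. For \textbf{Step 1}, I would unroll the recursion in Eq.~\eqref{eqn:general_layer}: the Jacobian $\partial\x_i^{L}/\partial\x_k$ is a sum over length-$L$ walks $k=v_0,v_1,\dots,v_L=i$ in $\G$, each walk contributing its product of normalized edge weights $\prod_{t}(d_{v_t}d_{v_{t+1}})^{-1/2}$ times an interleaved product of the weight matrices $\W^{l}$ and the diagonal ReLU-derivative masks along the walk. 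Adopting the standard assumption of \cite{xu2018representation} that each ReLU gate on each walk is open independently with probability $\rho$, the expectation factorizes: $\mathbb{E}\big(\partial\x_i^{L}/\partial\x_k\big)=c_{\rho}\,(\hat\A^{L})_{ik}\,\W^{L-1}\W^{L-2}\cdots\W^{0}$, where $\hat\A=\mathbf{D}^{-1/2}\A\mathbf{D}^{-1/2}$ is the symmetric normalized adjacency, $\mathbf{D}=\mathrm{diag}(d_1,\dots,d_N)$, the constant $c_{\rho}>0$ depends only on $\rho$ and $L$, and the sum over walks has collapsed into the matrix power. Since $\hat\A^{L}$ is entrywise nonnegative and, for an \emph{untrained} (randomly initialized) GCN, $\W^{L-1}\cdots\W^{0}$ is almost surely a fixed nonzero matrix, we obtain $I(i,k)=C\,(\hat\A^{L})_{ik}$ with $C=c_{\rho}\,\Vert\W^{L-1}\cdots\W^{0}\Vert>0$ a constant independent of $i$ and $k$; note also that $\hat\A$ is symmetric, so the order of indices in the definition of $I$ is immaterial.

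\textbf{Step 2} reduces $S$ to a row sum. Summing over labeled nodes, $S(i)=C\sum_{v_k\in\V^L}(\hat\A^{L})_{ik}$. Treating $\V^L$ as a representative sample of $\V$ --- equivalently, taking the expectation over a uniformly random labeled set of the prescribed size --- this becomes $\mathbb{E}[S(i)]=C\,\tfrac{|\V^L|}{N}\,r_i$, where $r_i=\sum_{k=1}^{N}(\hat\A^{L})_{ik}$ is the $i$-th row sum of $\hat\A^{L}$. Everything now hinges on showing that $r_i$ is increasing in $d_i$.

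\textbf{Step 3} establishes that monotonicity. The key algebraic fact is that the vector $\mathbf{s}$ with entries $s_i=\sqrt{d_i}$ is a $1$-eigenvector of $\hat\A$: $(\hat\A\mathbf{s})_i=d_i^{-1/2}\sum_{j\in\N(i)}A_{ij}=\sqrt{d_i}$, hence $\hat\A^{L}\mathbf{s}=\mathbf{s}$, i.e.\ $\sum_k(\hat\A^{L})_{ik}\sqrt{d_k}=\sqrt{d_i}$ for every $i$. If the degrees of the nodes lying within $L$ hops of $v_i$ are on average comparable across nodes --- a local-homogeneity (mean-field) assumption that is natural given the power-law degree profile in Figure~\ref{fig:degree_dist}, and that is exact for $L=1$, where $r_i=d_i^{-1/2}\sum_{j\in\N(i)}d_j^{-1/2}=\sqrt{d_i}\cdot\overline{d^{-1/2}}\big|_{\N(i)}$ --- this identity forces $r_i\asymp\sqrt{d_i}/\sqrt{\bar d}$ with $\bar d$ essentially independent of $i$. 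Hence $r_i$, and therefore $\mathbb{E}[S(i)]$, is strictly increasing in $d_i$, so $d_i>d_j$ implies $S(i)>S(j)$; informally, an untrained GCN's influence score grows like $\sqrt{d_i}$.

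\textbf{The main obstacle} is Step 3: the plain row sum of $\hat\A^{L}$ is \emph{not} monotone in $d_i$ for an arbitrary graph --- a low-degree node whose neighbors are all high-degree can outscore a high-degree node whose neighbors are all low-degree --- so the conclusion genuinely depends on the neighborhood degree structure being roughly homogeneous, which is where the empirical degree analysis does the real work. A secondary technical point is Step 1: one must check that the interleaved ReLU masks and weight matrices actually separate under the independence assumption (so that the expectation is a scalar times $\W^{L-1}\cdots\W^{0}$), and that ``untrained'' is invoked precisely enough that $\Vert\W^{L-1}\cdots\W^{0}\Vert$ can be pulled out as one positive constant common to all node pairs.
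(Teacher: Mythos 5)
Your proposal is correct at the same level of rigor as the paper's own argument and follows essentially the same route: both unroll the Jacobian into a sum over length-$L$ walks, factor out the ReLU masks and weight matrices under a mean-field independence assumption, and then invoke a neighborhood-degree homogeneity assumption to collapse the walk sum into a quantity proportional to $\sqrt{d_i}$. The paper reaches $\mathbb{E}(\partial \x_i^{L}/\partial \x_k) \propto \sqrt{d_i d_k}$ by peeling off the first hop (the $d_i$ neighbors each contributing a common constant $\nu$), which is the $L=1$ instance of your row-sum/eigenvector computation; your explicit remark that the row sum of $\hat{\A}^{L}$ is not monotone in $d_i$ without local homogeneity is a sharper statement of exactly where the paper's ``replace the expectation over walks by a constant $\nu$'' step does the real work.
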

\begin{proof}

The partial differential between $\x_i^{l}$ and $\x_k$ is derived as:
\begin{align}
    \frac{\partial \x_i^{l}}{\partial \x_k} = \frac{1}{\sqrt{d_i}} \cdot \text{diag}(\mathds{1}_{\sigma_l}) \cdot \W^k \cdot \sum_{v_n \in \N(i)}   \frac{1}{\sqrt{d_n}}\frac{\partial \x_n^{l-1}}{\partial \x_k},
\end{align}
where $\sigma_l$ denote the output from the activation function (i.e. ReLU) at the $l$-th GCN layer, and $\text{diag}(\mathds{1}_{\sigma_l})$ is a diagonal mask matrix representing the activation result. Using chain rule, we further derive:
\begin{align}
    \frac{\partial \x_i^{L}}{\partial \x_k} = \sqrt{d_i d_k} \cdot \sum_{p=1}^{\Psi} \prod_{l=L}^0 \frac{1}{d_{p^l}} \text{diag}(\mathds{1}_{\sigma_l}) \cdot \W^l,
\end{align}
where $\Psi$ is the set of all $(L+1)$-length random-walk paths on the graph from node $v_i$ to $v_k$, and $p^l$ represents the $l$-th node on a specific path $p$  ($p^L$ and $p^0$ denote node $i$ and $k$, accordingly).
Note that every path is fully-connected where $v_{p^l} \in \N(p^{l+1})$ for any $p$ and any $l$.
Similar to \citeauthor{xu2018representation} \cite{xu2018representation}, the expectation of ${\partial \x_i^{L}}/{\partial \x_k}$ can be estimated as follows:
\begin{align}
     \nonumber \mathop{\mathbb{E}}\left(\frac{\partial \x_i^{L}}{\partial \x_k}\right) & = \sqrt{d_i d_k} \cdot \sum_{p=1}^{\Psi} \mathop{\mathbb{E}}\left(\prod_{l=L}^0 \frac{1}{d_{p^l}} \text{diag}(\mathds{1}_{\sigma_l}) \cdot \W^l\right) \\
     & = \rho \sum_{v_n \in \N(i)} \sum_{p=1}^{\Psi_n} \mathop{\mathbb{E}}\left(\prod_{l=L-1}^0 \frac{1}{d_{p^l}} \text{diag}(\mathds{1}_{\sigma_l}) \cdot \W^l\right),
\end{align}
where $\rho =({\sqrt{d_k}}/{\sqrt{d_i}}) \cdot \text{diag}(\mathds{1}_{\sigma_L}) \cdot \W^L$ only correlated to $v_i$ and $v_k$, and $\Psi_n$ denote the set of all $L$-length walks from a neighborhood of $v_i$ to $v_k$. Assume the neighborhoods are randomly distributed (i.e., $v_n$ is (near) randomly sampled), the expectation on walks starting from neighborhoods can be replaced by a constant value $\nu$:
\begin{align}
     \sum_{p=1}^{\Psi_n} \mathop{\mathbb{E}}\left(\prod_{l=L-1}^0 \frac{1}{d_{p^l}} \text{diag}(\mathds{1}_{\sigma_l}) \cdot \W^l\right)  = \nu,
\end{align}
and we further have:
\begin{align}
    \nonumber \mathop{\mathbb{E}}\left(\frac{\partial \x_i^{L}}{\partial \x_k}\right) = \rho d_i \nu = \nu\sqrt{d_k d_i} \cdot \text{diag}(\mathds{1}_{\sigma_L}) \cdot \W^L \propto \sqrt{d_i},
\end{align}
therefore, if $d_i > d_j$, then we have $\mathop{\mathbb{E}}\left(\frac{\partial \x_i^{L}}{\partial \x_k}\right) > \mathop{\mathbb{E}}\left(\frac{\partial \x_j^{L}}{\partial \x_k}\right)$. By summing up over all labeled nodes in $\V^L$, we have $S(i) > S(j)$.
\end{proof}
We validate our conclusion in Figure \ref{fig:if_score}. 
\begin{figure}
    \centering
    \begin{subfigure}{0.9\columnwidth}
        \centering
        \includegraphics[width=.5\columnwidth]{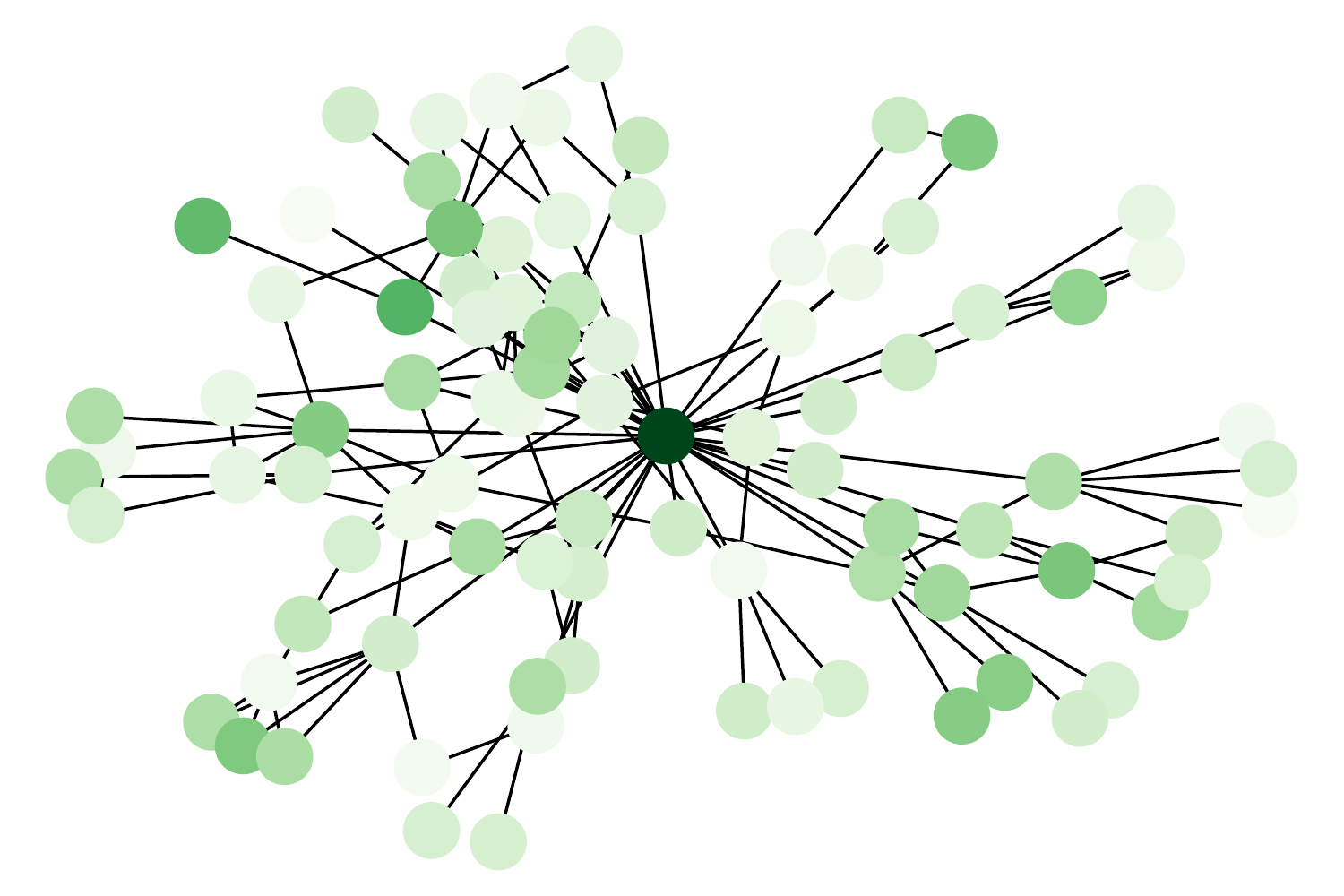}
        \caption{Topology of influence score on a subgraph of Cora. Darker colors denote higher influences.}
        \label{fig:if_score_example}
    \end{subfigure}
    \vskip 0.5em
    \begin{subfigure}{0.9\columnwidth}
        \centering
        \includegraphics[width=.5\columnwidth]{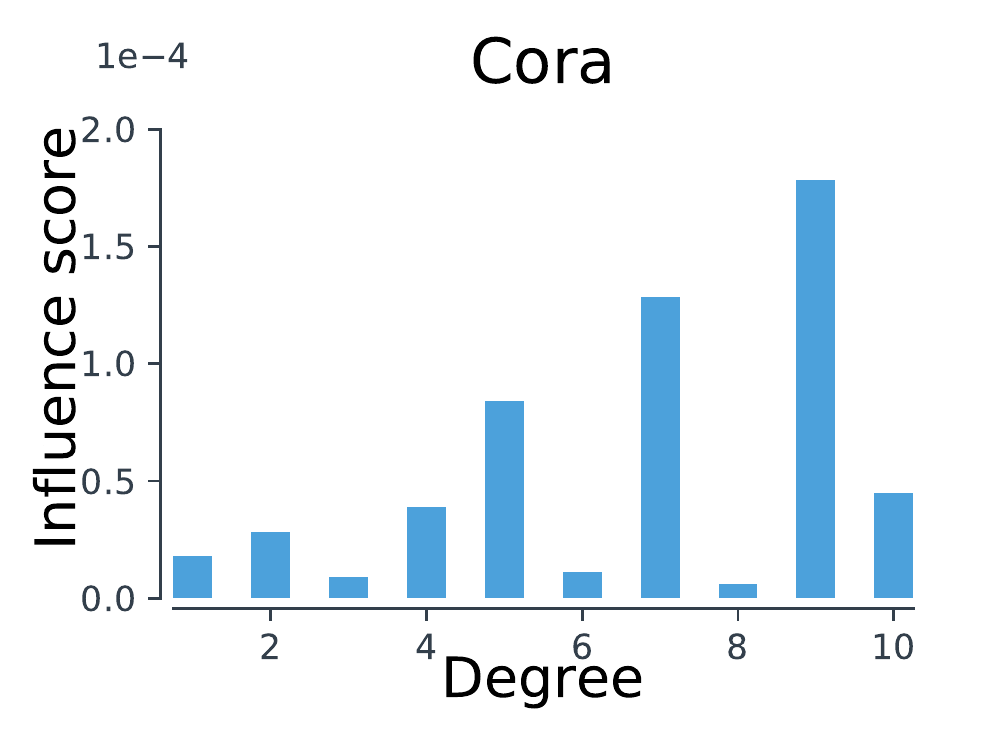} ~~
        \includegraphics[width=.5\columnwidth]{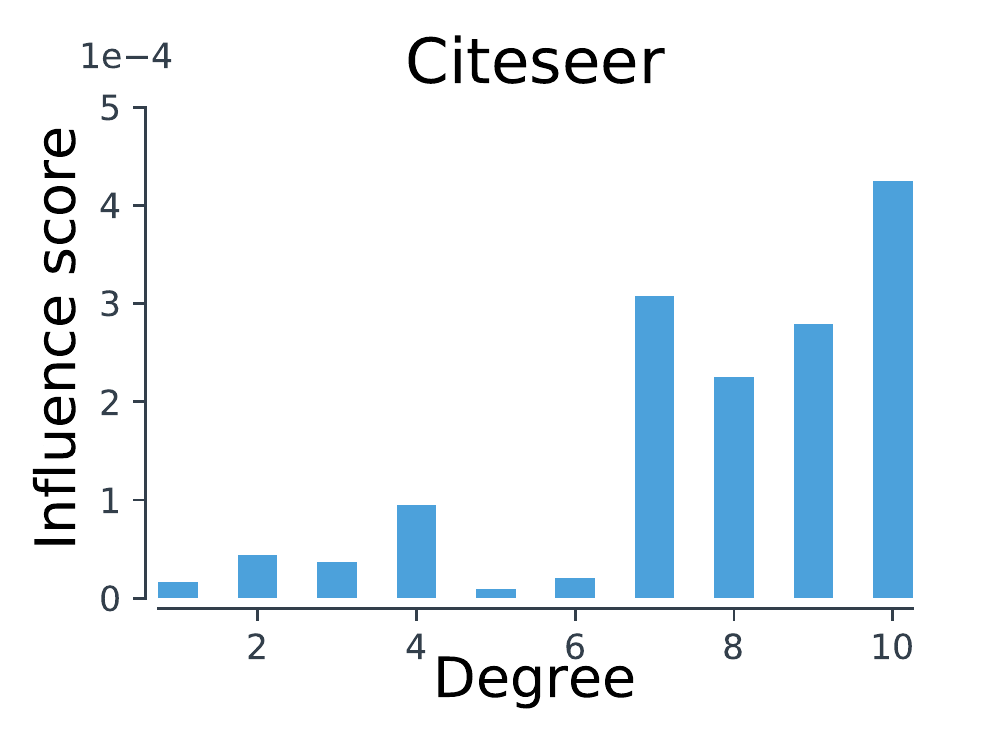}
        \hfill
        \includegraphics[width=.5\columnwidth]{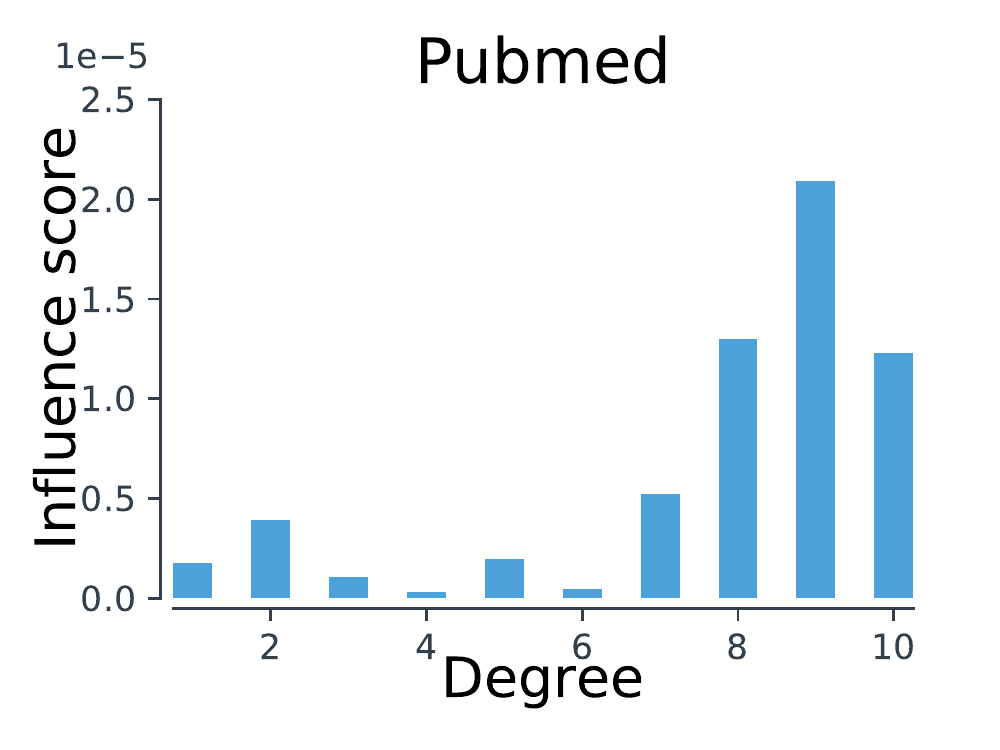} ~~
        \includegraphics[width=.5\columnwidth]{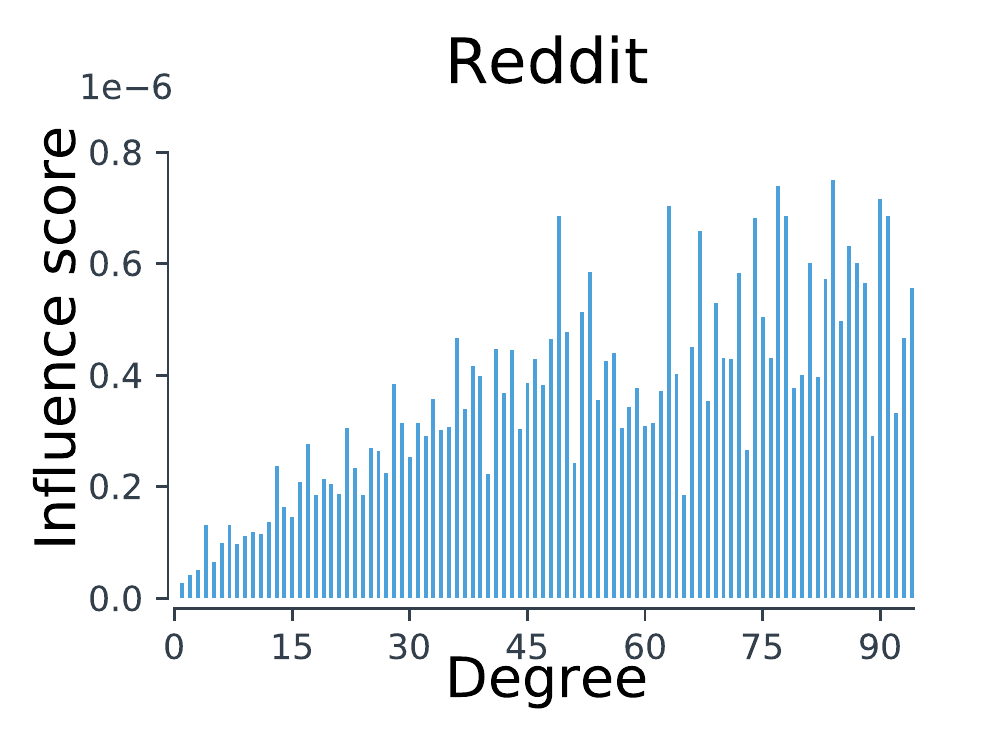}
        \caption{Distribution of influence score varying node degree.}
        \label{fig:if_score_dist}
    \end{subfigure}
    \vskip -1em
    \caption{Distributions of the Influence Score.}
    \label{fig:if_score}
    \vskip -2em
\end{figure}

We first visualize the influence score distribution on a subgraph of the Cora dataset in Figure \ref{fig:if_score_example}. Clearly, the hub node at the center of the graph gains a much higher influence score than others.
We further analyze the distribution of the influence score on four datasets, and report the results in Figure \ref{fig:if_score_dist}.
Clearly, the influence score increases as the node degree becomes larger. This indicates that nodes with larger degrees have higher impact on the training process of GCN, resulting in imbalanced error rate distribution over different degrees.

\section{Approach}
With the above analysis, we summarize the limitations of GCNs as follows: (1)  GCNs use the same set of parameters for all nodes and fails to model the local intra- and inter- relations of nodes, resulting in model-aspect biases; (2) low degree nodes are less likely to have labeled neighbors and participate inactively when training GCNs, such biases come from the data distribution aspect. 
To address these issues, we propose \ours that improves GCNs from two folds:
\textit{Firstly}, we propose a degree-specific GCN (\dsgnn) layer whose parameters are generated by a recurrent neural network (RNN). Nodes with different degrees have their own specific parameters so that the local intra-relation is captured. Besides, as parameters are iteratively generated from the same RNN, their inner correlations help model the inter-relation of nodes with similar degrees. The \dsgnn layer balances the global generalization and local discrepancies for nodes with various degrees.
\textit{Secondly}, we design a self-supervised-learning algorithm to construct pseudo labels with uncertainty within unlabeled nodes. This is achieved by training a Bayesian neural network (BNN). The \dsgnn is fine-tuned on both true and pseudo labels, where the artificial ones are weighted according to their uncertainties. This prevents \ours from overfiiting to inaccurate pseudo labels.

\subsection{Degree-Specific GCN Layer}
As the training of a GCN is dominated by high-degree nodes, using one set of parameters could lead to sub-optimal results. To increase the diversity of learned parameters for nodes with different degrees, following aggregation can be used to distinguish the degree-specific information from the graph:
\begin{equation}
\label{eqn:sliced_layer}
    \x_i^{l+1} = \sigma \Big(\sum_{j \in \N(i)} a_{ij}(\W^l + \W_{d(j)}^l) \x_j^l\Big),
\end{equation}
where $\W_{d(j)}^l$ captures degree-specific information. $\W^l$ is the original GNN parameters at layer $l$ in Eqn \ref{eqn:general_layer}.

The design of $\W_{d(j)}^l$ is a non-trivial task. One straight-forward way is making degree-specific parameters unique for all degrees. However, the maximum value of node degree on a graph can be extremely large due to the long-tail power-law distribution, constructing unique parameters for every degree is impractical. 
Besides, some higher degrees are underrepresented, with only few nodes available. How to prevent underfitting issue for them is also a challenging problem.
To overcome this issue, \citeauthor{wu2019demo} \cite{wu2019demo} propose a hashing-based solution where some degrees are mapped to the same entry of a hash table containing multiple sets of GCN parameters. By manually tuning the size of the hash table, the total number of degree-specific parameters is under control. 

\begin{figure}[!t]
    \centering
    \includegraphics[width=.8\columnwidth]{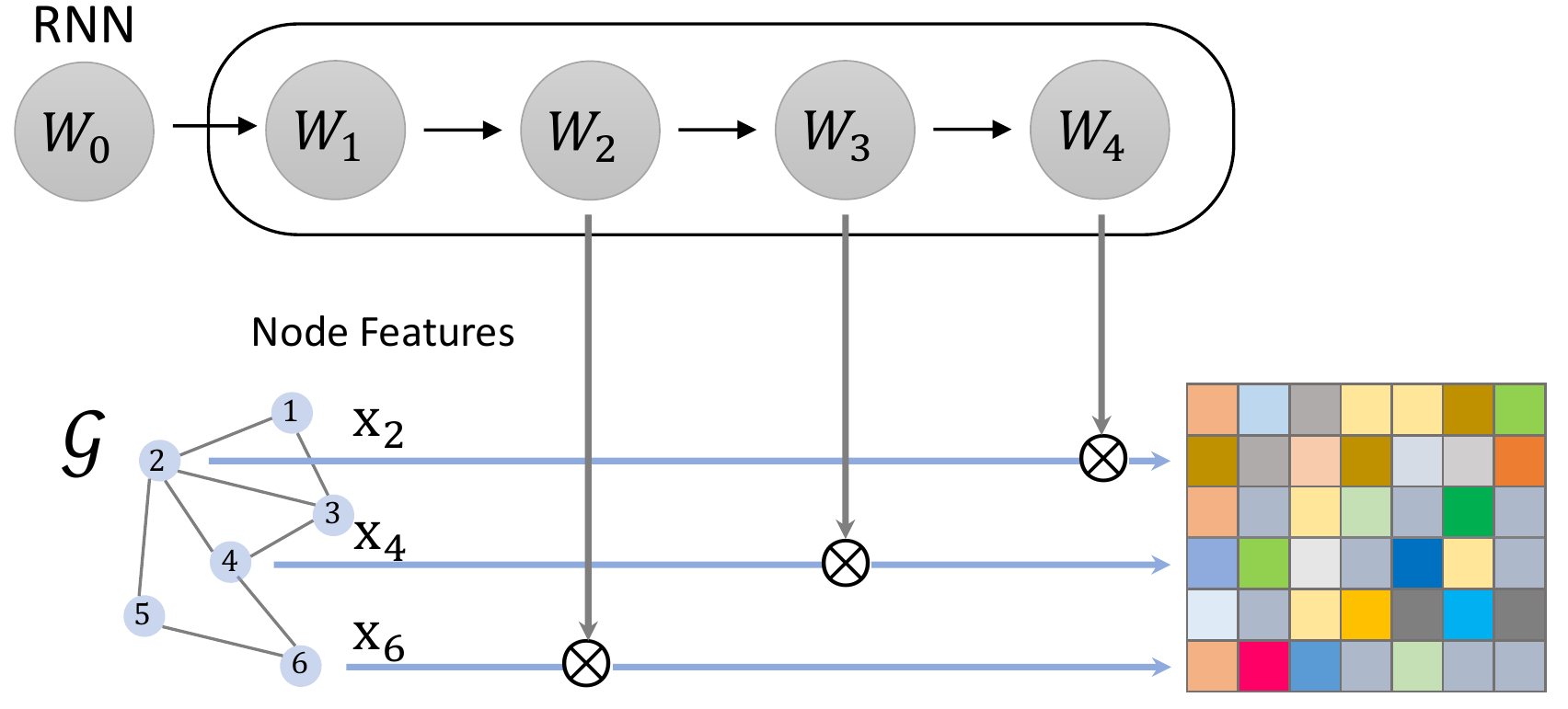}
    \vskip -1em
    \caption{GNN with degree-specific trainable parameters. Node features multiply with different parameters generated by the RNN according to their degree.}
    \label{fig:framework_gnn}
    \vskip -1.2em
\end{figure}

However, the hashing-based approach randomly maps node degree to parameters, and ignores the local inter-relations of nodes with similar degrees. If two nodes have close degree values, their may have a tight correlation. The necessity of capturing local inter-relation of nodes motivates us to adopt an RNN to generate the degree-specific parameters, which is shown in Figure \ref{fig:framework_gnn}.
Specifically, let $\W_0^l$ denote the initialization input to an RNN cell $\text{RNN}(\cdot)$, degree-specific parameters are generated as follows:
\begin{equation}
    W_{k+1}^l = \text{RNN}(W_{k}^l), \ k=0, 1, \cdots, d_{\text{max}},
\end{equation} 
where $W_{k+1}^l$ is the updated hidden state of the RNN after feeding $W_{k}^l$ as the input,
and $d_{\text{max}}$ is a threshold to prevent long-tail issue of the degrees. Nodes with degree higher than $d_{\text{max}}$ are processed using $W_{\text{max} + 1}^l$.
The generated parameters can cover every degree. The advantages of using an RNN are (1) as RNN is iterating over all degrees, generated degree-specified parameters are correlated with each other corresponding to the degree so that modeling local inter-relations of nodes is guaranteed; (2) the total number of actual trainable parameters is fixed (i.e., the initialization input and parameters in the RNN cell), which is more efficient comparing with setting up every $\W_{d(i)}^l$ separately or use a hashing table.
Note that the generated parameters from RNN naturally capture the local intra-relation because every degree has its unique parameters. Besides, the shared parameters $\W^l$ handles the globally shared node relations.

While the \dsgnn layer reduces degree-related biases in GCNs from the model aspect, 
low-degree nodes still participate less frequently when training the \dsgnn. To provide sufficient supervisions for low-degree nodes, we introduce a self-supervised-learning algorithm that creates high-quality pseudo-labels on unlabeled nodes.

\begin{figure*}[t]
    \centering
    \begin{subfigure}[b]{\columnwidth}
        \centering
        \includegraphics[width=.85\columnwidth]{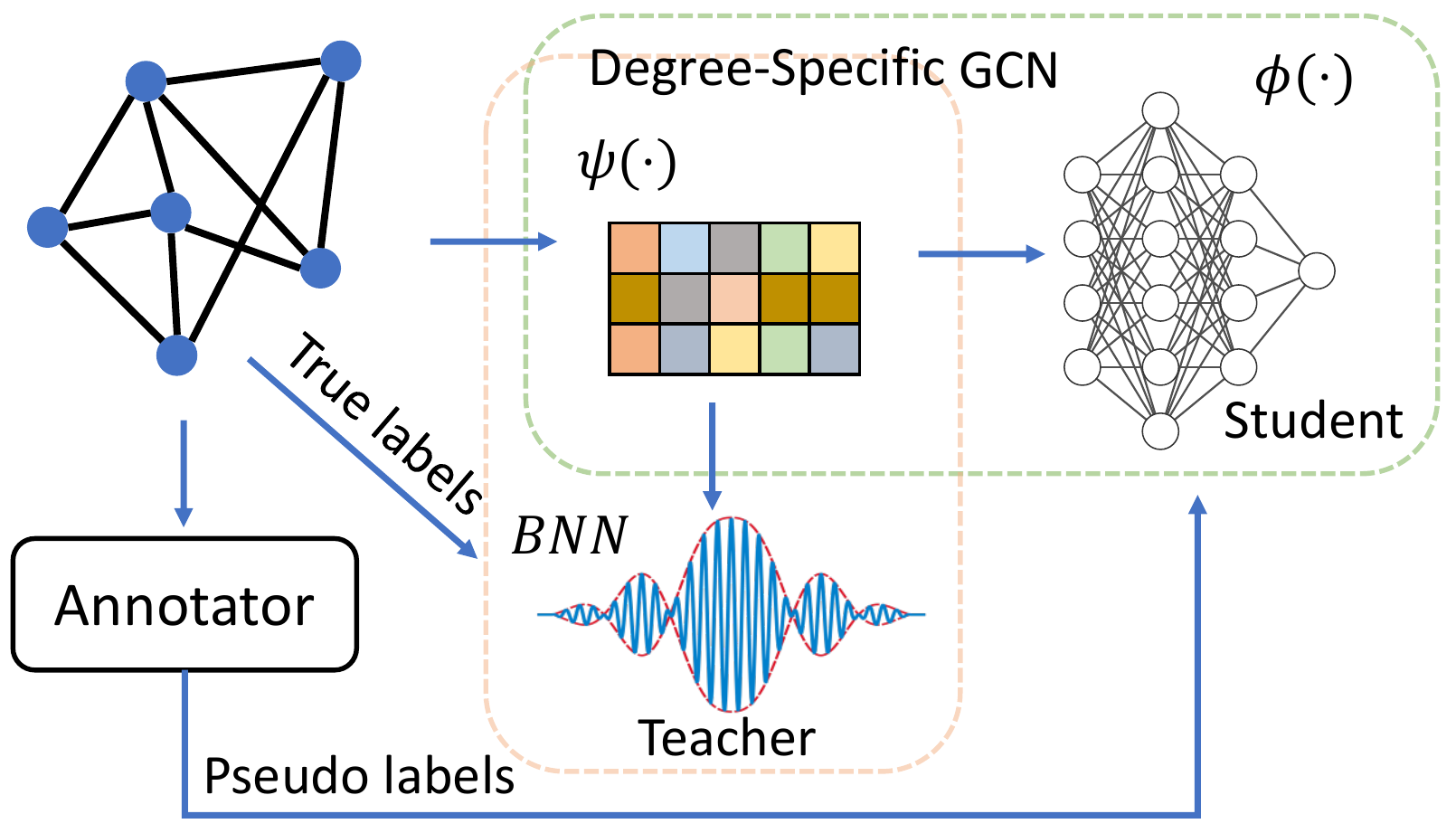}
        \caption{Pre-train student and teacher.}
        \label{fig:framework_pretrain}
    \end{subfigure}
    \begin{subfigure}[b]{\columnwidth}
        \centering
        \includegraphics[width=0.85\columnwidth]{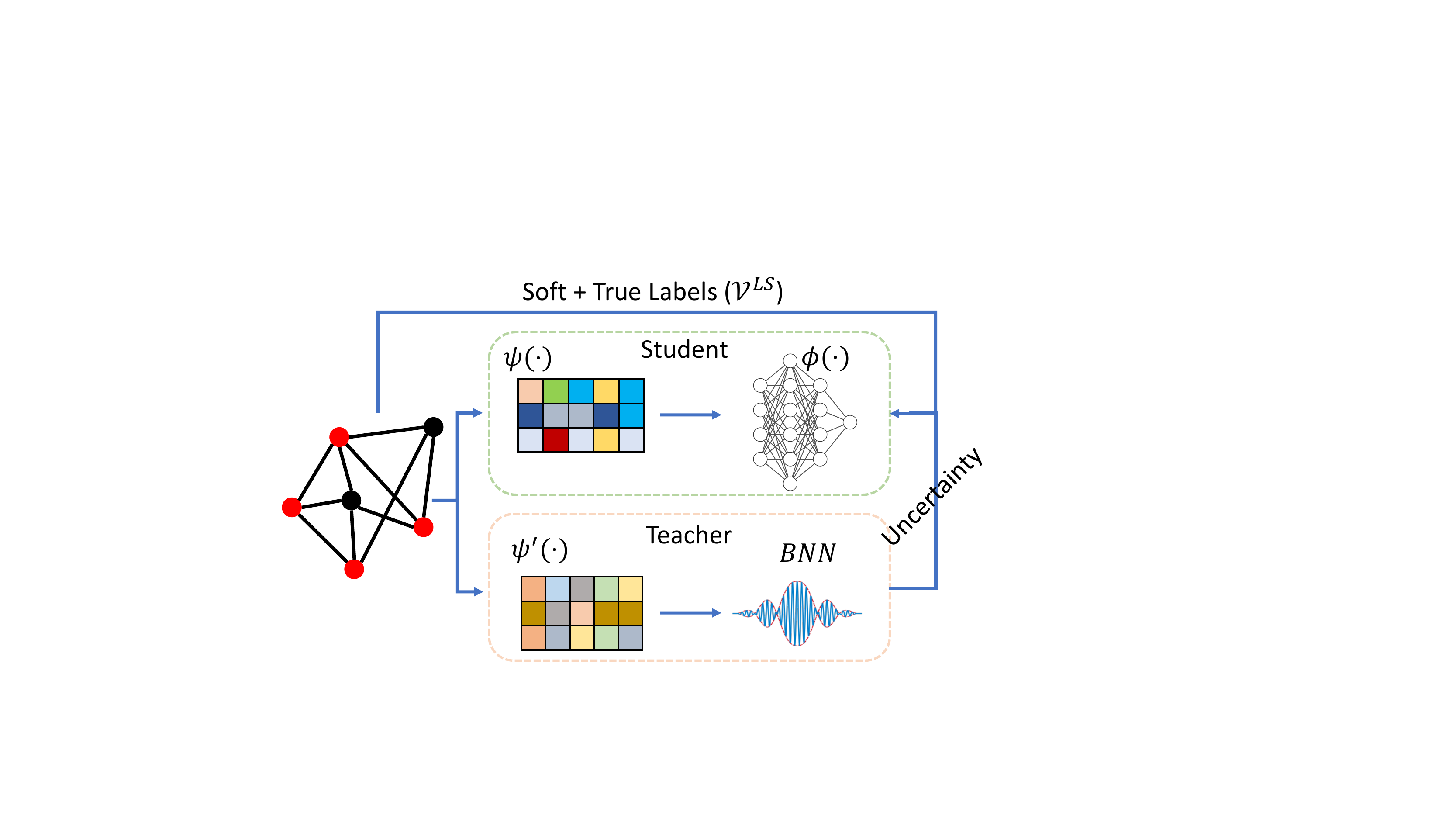}
        \caption{Finetune the student on $\V^{LS}$ with dynamic step size.}
        \label{fig:framework_finetune}
    \end{subfigure}
    \vskip -1em
    \caption{Overall framework of \ours.}
    \label{fig:framework}
    \vskip -1em
\end{figure*}

\subsection{Self-Supervised-Training with Bayesian Teacher Network}
In most semi-supervised settings on graph data, the number of unlabeled nodes is much larger than that of labeled ones (i.e., $|\V^L| \ll |\V^U|$. 
We assume the existence of a graph annotator that can heuristically generate pseudo-labels for nodes in $\V^U$, such as propagation algorithm \cite{zhu2002learning}, label spreading \cite{zhou2004learning}, and PairWalks \cite{wu2012learning}. The pseudo-labels are noisy and less accurate compared with the true labels from $\V^L$ because of the limitations of the annotator. 
The intuition of proposed self-learning algorithm is to leverage the large amount of pseudo-labels in the training of GCNs so that even for low-degree nodes can have labeled neighbors. However, different from existing literature \cite{sun2019multi,li2018deeper} that use pseudo labels in the same way of true labeled nodes, we also judge the quality of pseudo labels to avoid overfitting on inaccurate pseudo labels.
Specifically, we design a Bayesian neural network as a \textit{teacher} to justify the quality of pseudo-labels from the annotator,
so that the GCNs as a \textit{student} can fully exploit the pseudo-labels. There are two steps of the self-learning process as illustrated in Figure \ref{fig:framework}.

\subsubsection{Pre-training with the Annotator}
Firstly, we build the student network using the proposed degree-specific GNN layer. 
As shown in Figure \ref{fig:framework_pretrain}, the student first applies multiple \dsgnn layers over the input graph ($\psi(\cdot)$ part) to capture the dependencies of graph structure and to model the correlation among nodes with different degrees.
Taking the graph $\G$ as an input, $\psi(\cdot)$ transform each node into its representation vector.
To further classify each node, we then apply fully-connected layers followed by a softmax layer ($\phi(\cdot)$ part) on representation vectors from $\psi(\cdot)$. Different from conventional GNNs, the student network leverage $\psi(\cdot)$ to learn data representation from the graph, and assign the classification task to the second part $\phi(\cdot)$. Using the pseudo labels from the annotator, we pre-train the student network so that $\psi(\cdot)$ is fitted to the data and $\phi(\cdot)$ becomes a noisy classifier.
The whole student network is represented by $\phi(\psi(\cdot))$.

However, simply treating all pseudo labels as ground truth will hurt the performance. We then design a teacher network to estimate the uncertainty of pseudo labels from the annotator.
The teacher network is constructed based on a Bayesian neural network (BNN) \cite{dehghani2017fidelity}. We use the node representation from the data representation learner $\psi(\cdot)$ as the input, to train a fully-connected BNN using real-world truely labeled nodes $\V^L$, as illustrated in Figure \ref{fig:framework_pretrain}. 
In particular, the BNN aims at learning the posterior distribution of its parameters, defined as follows:
\begin{equation}
    p(\zeta | \psi(x)) \propto p(\psi(x)|\zeta)\cdot p(\zeta),
\end{equation}
where $\zeta$ denotes the parameters of the BNN, $p(\zeta)$ is the prior of $\zeta$ that contains our assumption of the network parameters, and $p(\psi(x)|\zeta)$ is the likelihood which describe the input data (i.e., node representation from $\psi(x)$). 
The probability distributions of model parameters $\zeta$ are updated with the Bayes theorem taking into account both the prior and the likelihood.
Without loss of generalities, we use normal distribution as the prior for the BNN. We fix the representation learner when updating the BNN part, so that the teacher can leverage the knowledge from the annotated results. Besides, training on top of $\psi(\cdot)$ ensures the teacher is learning in the same representation space of the student, so that the judgement of unlabeled nodes in further steps is unbiased and has no domain shifting for the student network. 
We use a two-layer fully-connected network as the approximation for the likelihood.
The posterior mean $\mu$ and posterior covariance $\kappa$ of the BNN is acquired after training the BNN model, and are further used to create soft labels on unlabeled nodes with uncertainties. In particular, for every unlabeled node $v_i \in \V^U$, we acquire its prediction and uncertainty score as follows:
\begin{align}
    \nonumber y_i^s = f(\mu(x_i)), \quad  c_i = g(\kappa(x_i)),
\end{align}
where $f(\cdot)$ and $g(\cdot)$ are two functions (e.g., neural networks) that map the posterior mean and covariance vectors to desired soft label and uncertainty score.

We visualize the prediction and uncertainty of the teacher BNN trained on a small subset from the reddit network dataset in Figure \ref{fig:teacher_analysis}. As we can see in Figure \ref{fig:uncertainty}, the uncertainty for labeled nodes are almost zero, indicating the teacher fit the training data very well. Meanwhile, we also observe that the uncertainty scores on low degree nodes tend to be larger, which is consistent with our previous analysis. As low degree nodes have less impact on the training loss function and receive less supervision from labeled neighbors, it is harder to generate a confident prediction for them. Similarly in Figure \ref{fig:teacher_classification}, it is more likely for low degree nodes to be misclassified than high degree ones.

\begin{figure}[t]
    \centering
    \begin{subfigure}{.49\columnwidth}
        \centering
        \includegraphics[width=\columnwidth]{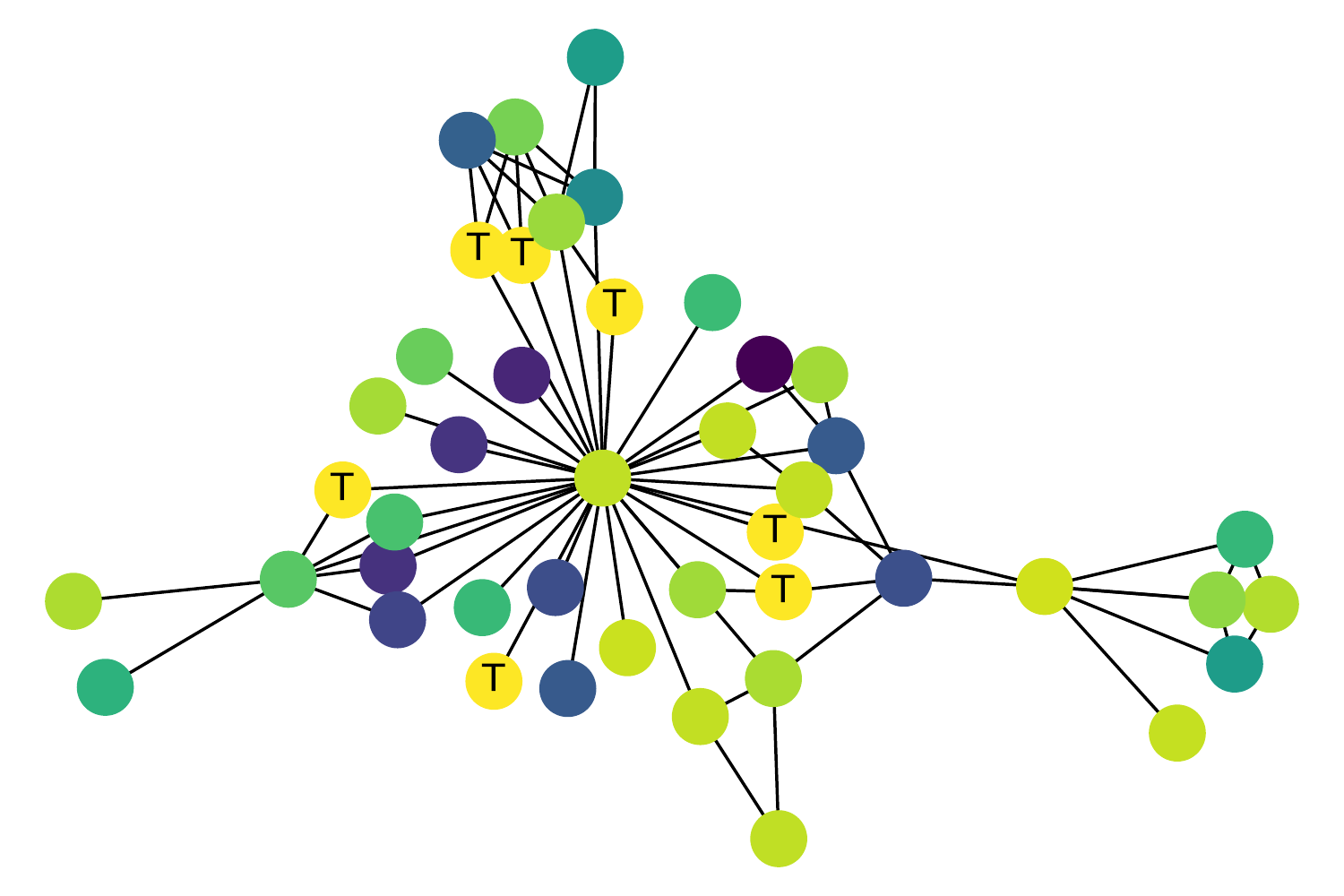}
        \caption{Uncertainty scores from the teacher network. Darker color means higher uncertainty and ``$\top$'' denotes training nodes.}
        \label{fig:uncertainty}
    \end{subfigure}
    \begin{subfigure}{.49\columnwidth}
        \centering
        \includegraphics[width=\columnwidth]{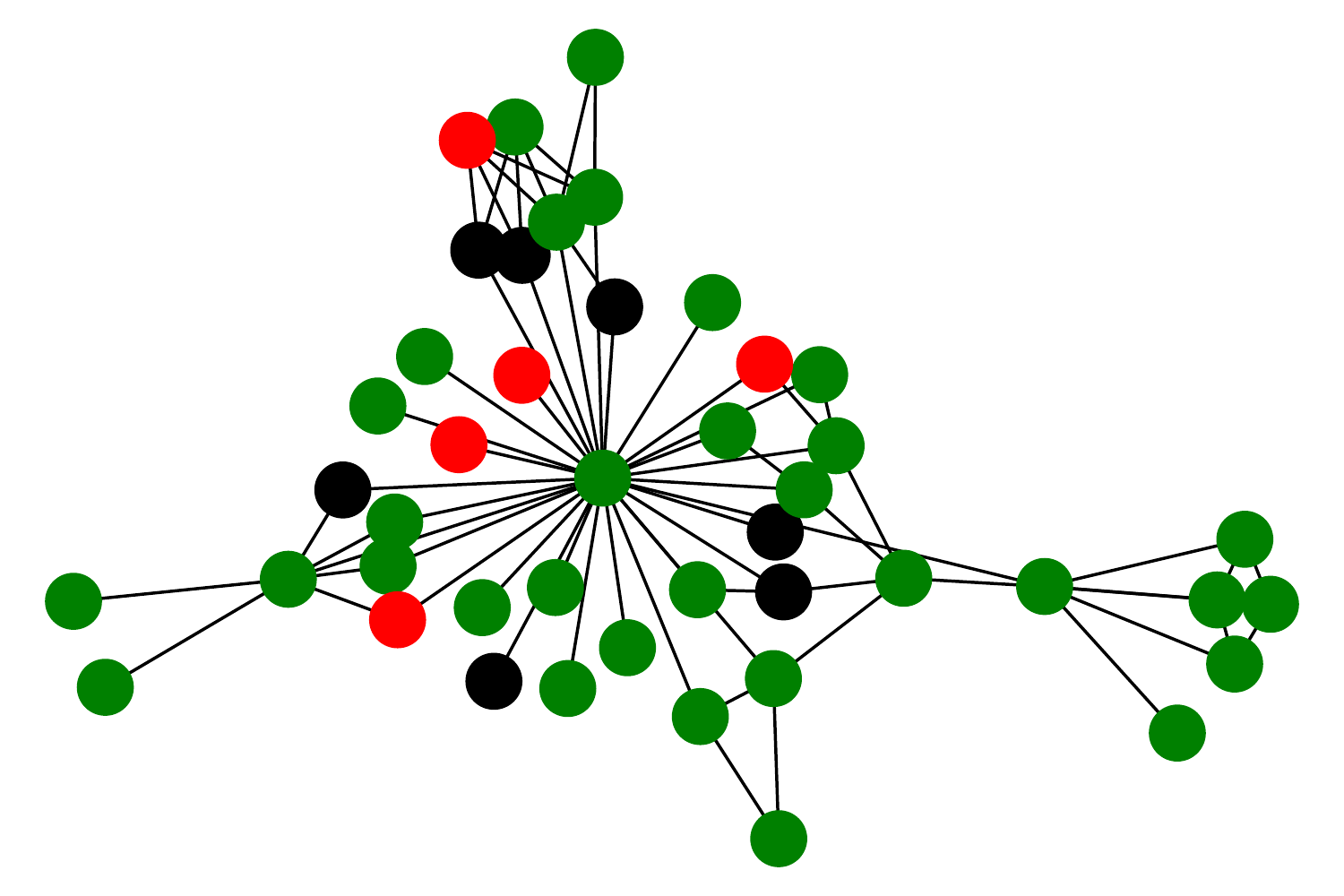}
        \caption{Classification error of the teacher network. Red and green denote wrong and correct prediction respectively, and black represents training nodes. }
        \label{fig:teacher_classification}
    \end{subfigure}
    \vskip -1em
    \caption{Uncertainty score and error distribution of the teacher network. Generally, nodes closer to labeled (training) ones tend to have lower uncertainty and error rate.}
    \label{fig:teacher_analysis}
    \vspace{-1em}
\end{figure}

\subsubsection{Fine-tuning Student with Uncertainty Scores.}
After the pre-training of student and teacher network, the second step of the self-learning process is fine-tuning the student network using generated labels and uncertainty scores from the teacher.
We define a softly-labeled node set $\V^S \subset \V^U$ where nodes in $\V^S$ are labeled identically by both the annotator and the teacher. The intuition is similar to majority vote. Given large amount of unlabeled nodes, it is worthwhile to compile a cleaner labeled node set as a compensation to the existing true labeled nodes.

Existing works exploring self-learning for GNNs treat selected pseudo labels in the same way of using labeled nodes. For example, \citeauthor{li2018deeper} \cite{li2018deeper} and \citeauthor{sun2019multi} \cite{sun2019multi} progressively add selected nodes with pseudo labels into the training set. 
However, such solutions are sub-optimal. One bottleneck is that all selected pseudo labels are equally treated, and are utilized in the same way of true labeled nodes. However, even for pseudo labels with high confidence, they still contain more noise than the real labeled part.

Fortunately, the proposed BNN-based teacher network naturally solves the above challenge. The generated uncertainty scores can be utilized when training with pseudo labels. Specifically, we fine-tune the student network on $\V^{LS} = \V^L \cup \V^S$ using stochastic gradient descent (SGD) algorithm, where the uncertainty score controls the step size for each nodes in $\V^{LS}$. 
We use $\theta$ to denote parameters in the student network, the optimization (learning) goal is as follows:
\begin{equation}
    \theta^* = {\text{argmax}}_{\theta} \L(\theta) = \sum_{v_i \in \V^{LS}}\L(v_i; \theta).
\end{equation}
The updating rule for parameters $\theta$ is:
\begin{equation} \label{eqn:update_theta}
    \theta^\prime = \theta - \sum_{v_i \in \V^{LS}} \eta_i  \L(v_i; \theta),
\end{equation}
where $\eta_i$ is a dynamic step size defined as follows:
\begin{equation} \label{eqn:dynamic_step}
    \eta_i = \eta \cdot \eta_i^c \cdot \eta_i^d = \eta \cdot \text{exp}(-\alpha c_i) \cdot \text{exp}(\beta d_i),
\end{equation}
which contains three parts. The first part $\eta$ is the global step size used in classic SGD. The second part $\eta_i^c$ penalize each sample (node) by its quality, using the uncertainty score acquired from the teacher network. We choose a negative exponential function over the uncertainty score so that nodes with larger uncertainty participate less in the updating process. The third term empirically assigns larger weights to nodes with higher degrees according to the observations in Figure \ref{fig:if_score_example} and Figure \ref{fig:teacher_analysis}. 
Here $\alpha$ and $\beta$ are hyperparameters that balance three parts in the dynamic step size. Generally, larger values of $\alpha$ and/or $\beta$ pay more attention to the uncertainty scores and the degree distribution, correspondingly.
After fine-tining on $\V^{LS}$ using SGD with dynamic step size, we use the student network to predict node labels.

\subsection{Training Algorithm}
We summarize the self-learning process in Algorithm \ref{alg}.
Line 1-3 are the pre-training of student and teacher network. After acquiring predictions and uncertainty scores from the pre-trained teacher in Line 4, we compile $\V^{LS}$ using true labels and the softly-labeled nodes (Line 5-6). Finally, as introduced in Line 7-9, the student network is fine-tuned on $\V^{LS}$ with dynamic step size.
Note that although we select GCN as the basis of \ours, the idea of capturing globally shared, local intra- and inter- relations of nodes with an RNN-based parameter generator, and using self-supervised-learning with dynamic step size are model agnostic. Namely, they can also be applied on other GNN models, such as graph attention networks \cite{velivckovic2017graph}, GraphSAGE \cite{hamilton2017inductive}, etc. We leave this part for future work.

\begin{algorithm}
	\caption{Self-learning for \ours}
    \label{alg}
    \KwIn{$\G = (\V, \E, \X)$ }
    \KwOut{Parameters $\theta$ of student network $\phi(\psi(\cdot))$}
	\tcp{Pre-training}
	Acquire pseudo-labels for $\V^U$ using a graph annotator; \\
	Pre-train $\phi(\psi(\cdot))$ on pseudo labels; \\
	Fix $\psi(\cdot)$ and pre-train BNN part of the teacher network; \\
	Acquire prediction $y_i^s$ and uncertainty score $c_i$ for every node in $\V^U$ from the teacher; \\
	\tcp{Fine-tuning}
	Compile a soft-labeled node set $\V^S  \subset \V^U$ where the teacher network agrees with the annotator; \\
	Build $\V^{LS} = \V^L \cup \V^S$ to fine-tune the student network; \\
	\While{not converge}{
	    Compute dynamic step size $\eta_i$ for $v_i \in \V^{LS}$ as $\eta_i = \eta \cdot \eta_i^c \cdot \eta_i^d$; \\
	    Update parameters of the student network as $\theta^\prime = \theta - \sum_{v_i \in \V^{LS}} \eta_i  \L(v_i; \theta)$; \\
	}
\end{algorithm}
\section{Experiments}
In this section, we conduct experiments on real-world datasets to evaluate the effectiveness of \ours. In particular, we aim to answer the following questions:
\begin{itemize}[leftmargin=*]
    \item Can \ours outperform existing self-training algorithms for GNNs on various benchmark datasets?
    \item How do the degree-specific design (\dsgnn), the machine teaching approach, and the dynamic step size contribute to \ours?
    \item How sensitive of \ours is on the selection of softly-labeled node set?
\end{itemize}
Next, we start by introducing the experimental settings followed by experiments on node classification to answer these questions.

\subsection{Experimental Setup}
\subsubsection{Datasets}
For a fair comparison, we adopt same benchmark datasets used by \citeauthor{sun2019multi} \cite{sun2019multi} and \citeauthor{li2018deeper} \cite{li2018deeper}, including Cora, Citeseer, Pubmed \cite{sen2008collective}. Each dataset contains a citation graph, where nodes represent articles/papers and edges denote citation correlation. Node features are constructed using bag-of words features. 
The detailed statistics of the datasets are summarized in Table \ref{tab:dataset}.

\begin{table}[]
    \centering
    \caption{Statistics of the Datasets}
    \label{tab:dataset}
    \vskip -1em
    \begin{tabular}{c|cccc} \toprule
Dataset & Nodes & Edges & Classes & Features \\ \midrule
Cora & 2708 & 5429 & 7 & 1433 \\
CiteSeer & 3327 & 4732 & 6 & 3703 \\
PubMed & 19717 & 44338 & 3 & 500 \\ \bottomrule
    \end{tabular}
    \vskip -1em
\end{table}


\begin{table*}[t]
    \centering
    \caption{Node Classification Performance Comparison on Cora, Citseer and PubMed}
    \vskip -1em
\begin{tabular}{c|ccccc|ccccc | ccc} \toprule
Dataset     &   \multicolumn{5}{c|}{Cora} & \multicolumn{5}{c|}{Citeseer} & \multicolumn{3}{c}{PubMed}\\ \hline
Label Rate             & 0.5\%   & 1\%     & 2\%     & 3\%     & 4\% 	& 0.5\%   & 1\%     & 2\%     & 3\%     & 4\%	& 0.03\%  & 0.06\%  & 0.09\%    \\ \midrule
LP           & 29.05 & 38.63 & 53.26 & 70.31 & 73.47 & 32.10 & 40.08 & 42.83 & 45.32 & 49.01 & 39.01 & 48.7  & 56.73 \\
ParWalks     & 37.01 & 41.40 & 50.84 & 58.24 & 63.78 & 19.66 & 23.70 & 29.17 & 35.61 & 42.65 & 35.15 & 40.27 & 51.33 \\
GCN          & 35.89 & 46.00 & 60.00 & 71.15 & 75.68 & 34.50 & 43.94 & 54.42 & 56.22 & 58.71 & 47.97 & 56.68 & 63.26 \\
DEMO-Net & 33.56 & 40.05 & 61.18 & 72.80 & 77.11 & 36.18 & 43.35 & 53.38 & 56.5 & 59.85 & 48.15 & 57.24 & 62.95  \\ 
Self-Train   & 43.83 & 52.45 & 63.36 & 70.62 & 77.37 & 42.60 & 46.79 & 52.92 & 58.37 & 60.42 & 57.67 & 61.84 & 64.73 \\
Co-Train     & 40.99 & 52.08 & 64.27 & 73.04 & 75.86 & 40.98 & 56.51 & 52.40 & 57.86 & 62.83 & 53.15 & 59.63 & 65.50 \\
Union        & 45.86 & 53.59 & 64.86 & 73.28 & 77.41 & 45.82 & 54.38 & 55.98 & 60.41 & 59.84 & 58.77 & 60.61 & 67.57 \\
Interesction & 33.38 & 49.26 & 62.58 & 70.64 & 77.74 & 36.23 & 55.80 & 56.11 & 58.74 & 62.96 & 59.70 & 60.21 & 63.97 \\
M3S          & 50.28 & 58.74 & 68.04 & 75.09 & 78.80 & 48.96 & 53.25 & 58.34 & 61.95 & 63.03 & 59.31 & 65.25 & 70.75 \\ \midrule
\ours        & \textbf{53.58} & \textbf{61.36} & \textbf{70.31} & \textbf{80.15} & \textbf{81.05} & \textbf{54.07} & \textbf{56.68} & \textbf{59.93} & \textbf{62.20} & \textbf{64.45} & \textbf{61.15} & \textbf{65.68} & \textbf{71.78} \\ \bottomrule
\end{tabular}
    \label{tab:node_classification}
\end{table*}

\subsubsection{Baselines}
We compare \ours with representative and state-of-the-art node classification algorithms, which includes:
\begin{itemize}[leftmargin=*]
    \item LP \cite{zhu2002learning}: Label Propagation is a classical self-supervised learning algorithm which where we iteratively assign labels to unlabelled points by propagating labels through the graph. It serves as the weak annotator in our framework.
    \item ParWalks~\cite{wu2012learning}: ParWalks  extends label propagation by using partially absorbing random walk.
    \item GCN \cite{kipf2016semi}:  GCN is a widely used graph neural network. It defines graph convolution via spectral analysis.
    \item DEMO-Net \cite{wu2019demo}: It proposes multi-task graph convolution where each task represents node representation learning for nodes with a specific degree value, thus leading to preserving the degree specific graph structure. DEMO-net also contains other constraints to improve the representation learning, including order-free and  seed-oriented. These constraints are removed for a fair comparison because they do not tackle the degree-related biases of GCNs, and can be applied on all above methods. We choose the weight version of DEMO-net due to better performances.
    \item Co-Training~\cite{li2018deeper}: This method uses the ParWalk to find the most confident vertices – the nearest neighbors to the labeled vertices of each class, and then add them to the label set to train a GCN.
    \item Self-Training, Union and Intersection \cite{li2018deeper}: Self-training picks the most confident soft-labels of GCN and puts it into the labeled node set to improve the performance of GCN.  Union takes the union of the most confident soft-labels by both GCN and ParWalk as self-supervision while Intersection takes the intersection of the two as the self-supervision.
    \item M3S \cite{sun2019multi}: Multi-Stage Self-Supervised Training leverages DeepCluster technique to provide self-supervision and utilizes the cluster information to iterative train GNN.  
\end{itemize}

\subsubsection{Settings and Hyperparameters}
The training and testing set are generated as follows: we randomly sample $x\%$ of nodes for training, $35\%$ nodes for testing, and treat the remained nodes as unlabeled ones for each dataset. Furthermore, to understand how \ours performs under various label sparsity scenarios in real-world, for CORA and Citeseer, we vary $x$ as $\{0.5, 1, 2, 3, 4\}$. Since PubMed is relative larger than Cora and CiteSeer, we vary $x$ as $\{0.03, 0.06, 0.09\}$  for it. Note that we set $x$ as small values because in typical setting of real-world semi-supervised node classification tasks, only a small amount of nodes are labeled for training~\cite{sun2019multi,li2018deeper}.
We adopt the same hyper-parameters for GCN as introduced by \citeauthor{kipf2016semi} \cite{kipf2016semi}, which is a two-layer GCN with 16 hidden units on each layer. For DEMO-Net, Self-Train, Co-train, Union, and Intersection, we adopt their public code and tune hyperparameters for the best performance. We implement M3S following the descriptions in the paper \cite{sun2019multi}. 
For the student network part, both $\phi(\cdot)$ and $\psi(\cdot)$ are implemented by one \dsgnn layer. We set $d_{\text{max}}$ to 10. The Bayesian neural network part of the teacher contains two fully-connected layers, each contains 16 hidden units. We fix $\alpha$ and $\beta$ to 1.
Note that for fair comparison, we set all self-supervised-learning GCNs to two-layers with 16 hidden units, which is aligned with both GCN and \ours.
We report the averaged results over 10 times of running.

\subsection{Node Classification Performance} 

To answer the first research question, we conduct node classification with comparison to existing self-training algorithms for GNNs on the datasets introduced above. The experimental results in terms of accuracy for the three datasets are reported in Table~\ref{tab:node_classification}. From the table, we make the following observations:
\begin{itemize}[leftmargin=*]
    \item Generally, self-supervision based approaches such as M3S, Intersection and Union outperform algorithms without self-supervision such as LP and GCN, which implies that self-supervision could help provide more labeled nodes to training so that the percentage of labeled neighborhood of low-degree increases.
    \item As label rate $x$ increases, the performance improvement of  self-supervision based approaches over non-self-supervision approaches decreases. For example, on Cora dataset, as $x$ increase from $0.5\%$ to $4\%$, the performance improvement of M3S and SL-DGNN over GCN are $\{14.39, 12.74, 8.04, 3.94, 3.12\}$ and $\{17.69, 15.36,$ $ 10.31,$ $9.00, 5.37\}$, respectively. This is because as the amount of labeled data increases, the percentage of labeled neighborhood of low-degree also increases, which makes the introduction of self-supervision less useful.
    \item For all the three datasets and label rate, \ours consistently outperforms all the baselines significantly, which shows the effectiveness of the proposed framework. In particular, both M3S and \ours adopt self-supervision. \ours significantly outperforms M3S because \ours explicitly model degree-specific GNN layer through LSTM, which could benefit the low-degree nodes more.
\end{itemize}

\subsection{Performance on Low Degree Nodes}
\ours is motivated by the observation that the number of labeled nodes for low-degree nodes is very much smaller than that of high-degree nodes, which makes GNN biased towards high-degree nodes. Thus, degree specific GNN layer and self-training with Bayesian teacher networks are leveraged to alleviate the issue. To validate the effectiveness of the proposed framework \ours on low-degree nodes, we further visualize the node classification performance of low-degree nodes on Cora and Citeseer in Figure \ref{fig:low_degree}. Note that for Cora and Citeseer, $96.45\%$ and $97.53\%$ nodes have a degree less than 11.  From the figure, we observe that:
\begin{itemize}[leftmargin=*]
    \item Both \dsgnn and \ours outperform GNN significantly, especially on node with small degrees, which shows the effectiveness of degree specific layer and self-supervision for improving performance of low-degree nodes. In addition, \ours has better performance than \dsgnn, which implies that the degree specific layer and self-supervision improves the performance from two different perspectives. Degree specific layer tries to learn node-specific parameters to reduce the bias towards high-degree nodes while self-supervision  tries to improve the number labeled nodes in each node's neighborhood.
    \item When degree the node degree is very small, say $\{1,2,3,4,5\}$, the improvement of \dsgnn and \ours is very significant. As the degree become larger, the improvement becomes smaller. This is because when degree is very small, most of these nodes have very few labeled nodes in their neighborhood. A small amount of soft-label and the degree-specific parameters could improve the performance a lot. However, when the degree become larger, there are already enough supervision to train a good GNN, which makes the improvement insignificant. However, as the majority nodes in graphs are low degree nodes, \ours can still improve the overall performance significantly.
\end{itemize}

\begin{figure}[t]
    \centering
    \begin{subfigure}{.49\columnwidth}
        \centering
        \includegraphics[width=\columnwidth]{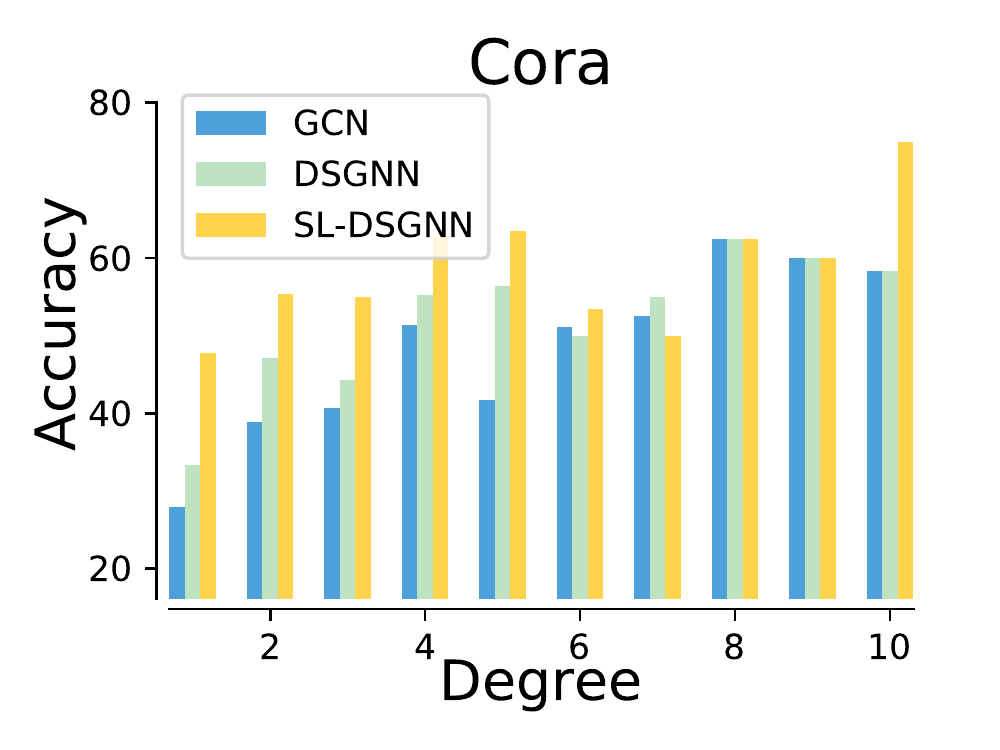}
        \label{fig:low_degree_cora}
    \end{subfigure}
    \begin{subfigure}{.49\columnwidth}
        \centering
        \includegraphics[width=\columnwidth]{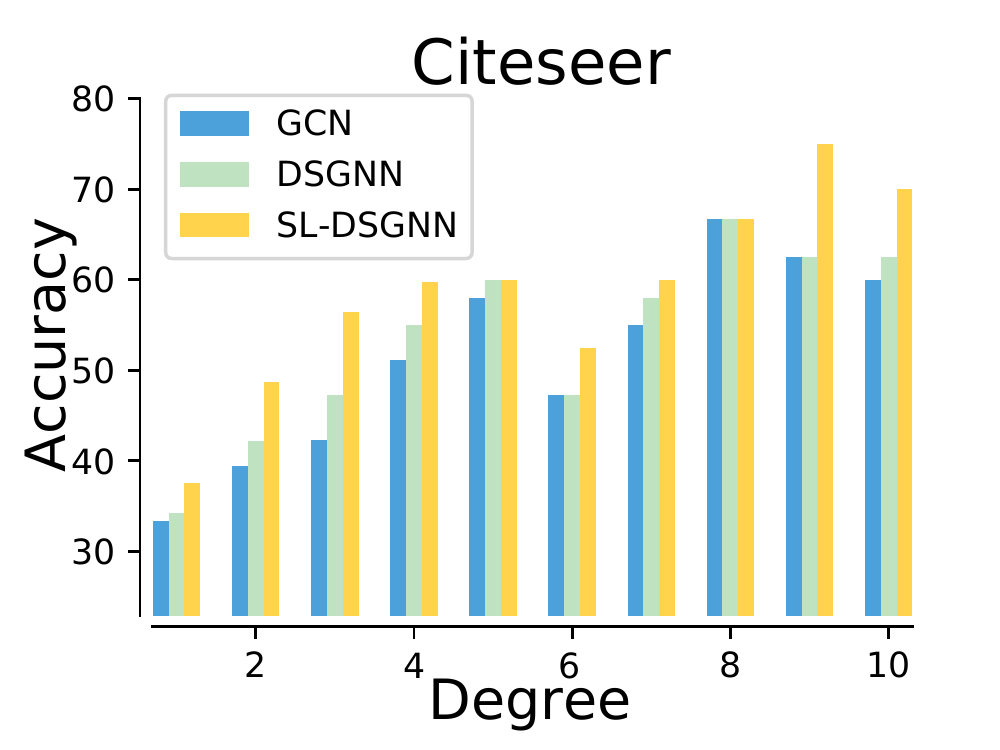}
        \label{fig:low_degree_citeseer}
    \end{subfigure}
    \vskip -2em
    \caption{Node Classification Performance on Nodes with Different Degrees}
    \label{fig:low_degree}
    \vskip -1em
\end{figure}

\begin{table}[t]
    \centering
    \caption{Ablation study on Cora dataset.} \label{tab:abla_cora}
    \vskip -1em
\begin{tabular}{c|ccccc} \toprule
Label Rate             & 0.5\%   & 1\%     & 2\%     & 3\%     & 4\%     \\ \midrule
\dsgnn                 & 36.11 & 47.67 & 61.91 & 73.87 & 77.03  \\
MT-GNN                 & 50.51 & 57.47 & 67.26 & 78.52 & 78.84  \\
$\text{\ours}_{fs}$    & 51.36 & 59.85 & 68.81 & 79.14 & 79.90  \\
SL-GNN                 & 52.05 & 60.41 & 69.51 & 79.75 & 80.21  \\ \midrule
\ours                  & 53.58 & 61.36 & 70.31 & 80.15 & 81.05  \\ \bottomrule
\end{tabular}

\vskip 1em
\centering
    \caption{Ablation study on Citeseer dataset.} \label{tab:abla_citeseer}
    \vskip -1em
\begin{tabular}{c|ccccc} \toprule
Label Rate & 0.5\%   & 1\%     & 2\%     & 3\%     & 4\%     \\ \midrule
\dsgnn    & 37.51 & 44.75 & 55.41 & 56.9 & 60.24 \\
MT-GNN    & 49.78 & 50.75 & 55.14 & 59.01 & 61.23  \\
$\text{\ours}_{fs}$    & 51.89 & 53.26 & 58.38 & 60.63 & 62.15 \\
SL-GNN    & 52.77 & 54.79 & 57.27 & 61.98 & 63.99  \\ \midrule
\ours     & 54.07 & 56.68 & 59.93 & 62.20 & 64.45   \\ \bottomrule
\end{tabular}
\vspace{-1em}
\end{table}

\subsection{Ablation Study}
In this subsection, we conduct ablation study to understand the impact of degree-specific GNN, the dynamic step size for SGD, and the self-teaching algorithm, which answers the second research question. 
Specifically, several variations of \ours are compared including (1): \dsgnn which applies the degree-specific parameters on GCN; (2) MT-GNN which replace the dynamic step size with original one and remove the softly-labeled node set from $\V^{LS}$ (i.e., only use the labeled nodes for fine-tuning the student network). MT-GNN can be treated as a GNN enhanced by the vanilla machine teaching algorithm; (3) $\text{\ours}_{fs}$ which removes the dynamic step size; and (4) SL-GNN which removes the degree-specific design in the student network. The performance of \ours and the variants on Cora and Citeseer are reported in Table~\ref{tab:abla_cora} and \ref{tab:abla_citeseer}, respectively. From these two tables, we observe that: (i) In terms of the comparison between SL-GNN and \ours, \ours performs slightly better than SL-GNN, which shows that degree specific layer can slightly improve the performance; (ii) In terms of the comparison between $\text{\ours}_{fs}$ and \ours, \ours has better performance than $\text{\ours}_{fs}$, which is because $\text{\ours}_{fs}$ doesn't adopt the dynamic step size; and (iii) \ours significantly outperforms \dsgnn, which shows the effectiveness of the proposed self-supervised training.

\begin{table}[]
    \centering
    \caption{Influence of the softly-labeled node set.}
    \vskip -1em
\begin{tabular}{c|c|ccccc} \toprule
Dataset & Node set             & 0.5\%   & 1\%     & 2\%     & 3\%     & 4\%     \\ \midrule
\multirow{4}{*}{Cora} & \dsgnn & 36.11 & 47.67 & 61.91 & 73.87 & 77.03\\ 
& $\V^S_A$     & 47.21 & 55.10 & 67.15 & 76.39 & 75.07 \\
& $\V^S_T$     & 50.73 & 58.29 & 68.85 & 77.24 & 76.93  \\
& \ours           & 53.58 & 61.36 & 70.31 & 80.15 & 81.05 \\\midrule
\multirow{4}{*}{Citeseer} & \dsgnn & 37.51 & 44.75 & 55.41 & 56.9 & 60.24 \\
& $\V^S_A$    & 50.68 & 53.42 & 57.10 & 60.52 & 60.63  \\ 
& $\V^S_T$    & 52.25 & 52.80 & 55.13 & 61.82 & 61.01 \\ 
& \ours  & 54.07 & 56.68 & 59.93 & 62.20 & 64.45 \\\bottomrule
\end{tabular}
    \label{sens:label}
    \vspace{-1em}
\end{table}

\subsection{Sensitivity on Softly-labeled Node Set}
In this subsection, we further analyze how the construction of softly-labeled node set can impact the performance of \ours. We compare the intersection approach in \ours with the following alternations: (1) using pseudo labels from the annotator and build $\V^S_A$ for all unlabeled nodes; (2) using predictions from the teacher network and compile $\V^S_T$ for all unlabeled nodes; and (3)  without adding any soft labels, which is actually \dsgnn. 
The node classification performance of \ours with comparison to the three alternatives is reported in Table \ref{sens:label}. From the table, we make the following observations: (i) Compared with training without soft-labels, i.e., trained on $\mathcal{V}^L$ only, using soft-labels, i.e., $\mathcal{V}_A^S$, $\mathcal{V}_T^S$ and $\mathcal{V}^S$,  can significantly improve the performance, which shows the importance of soft-labels in providing supervision to GNN for classification; and (ii) Though $\mathcal{V}_A^S$, $\mathcal{V}_T^S$ and $\mathcal{V}^S$ all utilize soft-labels, the performance of $\mathcal{V}^S$ is much better than  $\mathcal{V}_A^S$ and $\mathcal{V}_T^S$, which indicates that the teacher network and the annotator may infer some wrongly labeled nodes that could negatively affect the performance. Taking the intersection of these two can help pick nodes with correct soft labels and improve the performance.


\section{Conclusion}
In this paper, we empirically analyze an issue of GNN for semi-supervised node classification, i.e., when labeled nodes are randomly distributed on the graph, nodes with low degrees tend to have very few labeled nodes, which results in  sub-optimal performance on low-degree nodes. To solve this issue, we propose a novel framework \ours, which leverages degree-specific GCN layers and the self-supervised-learning with Bayesian teacher network to introduce more labeled neighbors for low-degree nodes. Experimental results on real-world detests demonstrate the effectiveness of the proposed framework for semi-supervised node classification. Further experiments are conducted to help understand the contributions of each components of \ours.

There are several interesting directions which need further investigation. First, the proposed \dsgnn layer and self-supervised-learning with Bayesian teacher network are generic framework which can benefit various GNNs. In this paper, we only use GCN as backbone. We will investigate the framework for other GNNs such as GAT~\cite{velivckovic2017graph}. Second,  we mainly focus on the degree issue of attributed graphs. Heterogeneous information networks~\cite{shi2016survey} are also pervasive in the real world. Similar issue also exists in heterogeneous graphs. Therefore, we will extend \ours for heterogeneously network by considering different types of links/edges.

\section*{Acknowledgement}
This material is based upon work supported by, or in part by, the National Science Foundation (NSF) under grant  IIS-1909702, IIS-1955851, and the Global Research Outreach program of Samsung Advanced Institute of Technology under grant \#225003. The findings and conclusions in this paper do not necessarily reflect the view of the funding agency.

\bibliographystyle{ACM-Reference-Format}
\bibliography{ref}

\end{document}